\newcommand{\R}{\mathbb{R}}
\definecolor{theoremshade}{RGB}{238,244,255}
\definecolor{propshade}{RGB}{255,250,225}
\definecolor{lemmashade}{RGB}{238,250,238}
\definecolor{corshade}{RGB}{245,240,255}
\definecolor{definitionshade}{RGB}{255,243,235}
\definecolor{remarkshade}{RGB}{245,245,245}
\definecolor{exampleshade}{RGB}{238,250,252}
\definecolor{assumptionshade}{RGB}{255,240,240}
\theoremstyle{plain}
\newtheorem{theoreminner}{Theorem}[section]
\newtheorem{lemmainner}[theoreminner]{Lemma}
\newtheorem{corollaryinner}[theoreminner]{Corollary}
\newtheorem{propositioninner}[theoreminner]{Proposition}
\theoremstyle{definition}
\newtheorem{definitioninner}[theoreminner]{Definition}
\newtheorem{exampleinner}[theoreminner]{Example}
\newtheorem{assumptioninner}[theoreminner]{Assumption}
\newtheorem{remark}{Remark}[section]
\newenvironment{theorem}[1][]
  {\begin{mdframed}[style=theoremstyle]\begin{theoreminner}[#1]}
  {\end{theoreminner}\end{mdframed}}
\newenvironment{lemma}[1][]
  {\begin{mdframed}[style=lemmastyle]\begin{lemmainner}[#1]}
  {\end{lemmainner}\end{mdframed}}
\newenvironment{corollary}[1][]
  {\begin{mdframed}[style=corstyle]\begin{corollaryinner}[#1]}
  {\end{corollaryinner}\end{mdframed}}
\newenvironment{proposition}[1][]
  {\begin{mdframed}[style=propositionstyle]\begin{propositioninner}[#1]}
  {\end{propositioninner}\end{mdframed}}
\newenvironment{example}[1][]
  {\begin{mdframed}[style=examplestyle]\begin{exampleinner}[#1]}
  {\end{exampleinner}\end{mdframed}}
\title{\bf On the Hidden Biases of  Flow Matching Samplers}
\author{
	Soon Hoe Lim$^{1,2}$\thanks{Corresponding author: \url{shlim@kth.se}.} \vspace{+0.3cm} \\ 
	$^1$Department of Mathematics, KTH Royal Institute of Technology \\
	$^2$Nordita, KTH Royal Institute of Technology and Stockholm University 
}
\date{\today}
\begin{document}
\maketitle

\begin{abstract}
Flow matching (FM) constructs continuous-time ODE samplers by prescribing probability paths
between a base distribution and a target distribution. In this note, we study FM through the lens of finite-sample plug-in estimation. In addition to replacing population expectations by sample
averages, one may replace the target distribution itself by a finite-sample surrogate, ranging
from the empirical measure to a smoothed estimator.
This viewpoint yields a natural hierarchy of empirical FM models. For affine conditional flows, we derive the exact empirical minimizer and identify a smoothed plug-in regime in
which the terminal law is exactly a kernel-mixture estimator. 
This plug-in perspective clarifies several coupled finite-sample biases of empirical FM.
First, replacing the target law by a finite-sample surrogate changes the statistical target.
Second, the empirical minimizer is generally not a gradient field, even when each conditional flow is.
Third, a fixed empirical marginal path does not determine a unique particle dynamics: one may add extra vector fields whose probability flux has zero divergence without changing the marginal path. For Gaussian affine conditional paths, we give explicit families of such flux-null corrections. Finally, the source distribution provides a primary mechanism controlling upper tails of kinetic energy. In particular, Gaussian bases yield exponential upper-tail bounds for instantaneous and integrated kinetic energies, whereas polynomially tailed bases yield corresponding polynomial upper-tail bounds. 

\end{abstract}


\section{Introduction}
The main goal of generative modeling is to use finitely many samples from an unknown target distribution to construct a sampler capable of generating new samples from the same  distribution. Among recent approaches, flow matching (FM) \cite{lipman2022flow, lipman2024flow} and the closely related variants \cite{albergo2023stochastic, liu2022flow} are notable for their flexibility and simplicity, and fit naturally within the broader framework of dynamical measure transport \cite{marzouk2016introduction}.  Given a target probability distribution, FM learns a time-dependent velocity field defining a deterministic continuous transformation that transports a base or source distribution, typically Gaussian, to the target distribution.

A useful way to understand the finite-sample behavior of FM is through the classical distinction between \emph{population} and \emph{plug-in} estimation. In supervised learning \cite{bach2024learning}, one begins with an unknown probability measure $P$ on a measurable space $\mathcal Z$, a pre-specified hypothesis space $\mathcal F$, a loss function $\ell:\mathcal F\times \mathcal Z\to \mathbb R$, and the population risk
\[
R(f):=\int \ell(f,z)\, P(dz), \qquad f\in\mathcal F.
\]
A population risk minimizer is any
\(
f^\star\in \arg\min_{f\in\mathcal F} R(f).
\)
Since $P$ is unknown, one replaces it by a finite-sample surrogate. The most basic choice is the empirical measure
\[
\hat P_n:=\frac1n\sum_{i=1}^n \delta_{Z_i},
\qquad Z_1,\dots,Z_n \overset{\mathrm{i.i.d.}}{\sim} P,
\]
which yields the empirical risk
\[
\hat R_n(f):=\int \ell(f,z)\,\hat P_n(dz)=\frac1n\sum_{i=1}^n \ell(f,Z_i).
\]
This is empirical risk minimization. A second possibility is to replace $P$ by a regularized plug-in estimator $\tilde P_{n,h}$, for instance one induced by a kernel density estimator, and to work instead with
\[
\hat R_{n,h}(f):=\int \ell(f,z)\,\tilde P_{n,h}(dz).
\]

This classical picture already suggests three distinct regimes:
\begin{enumerate}[label=(\roman*),leftmargin=2.5em]
    \item \textbf{Population level.} One reasons directly with the unknown law $P$.
    \item \textbf{Raw empirical plug-in.} One replaces $P$ by the empirical measure $\hat P_n$.
    \item \textbf{Smoothed plug-in.} One replaces $P$ by a regularized estimator $\tilde P_{n,h}$.
\end{enumerate}
The third regime is fundamental in nonparametric statistics \cite{gyorfi2002distribution, tsybakov2008nonparametric}: smoothing introduces a bias--variance tradeoff and, in ambient dimension $d$, inherits the familiar curse of dimensionality of kernel-based estimation.

The same hierarchy appears naturally in FM, but now the unknown object is not only a risk functional but an entire target law. Let $p_0$ be a base distribution on $\mathbb R^d$ and let $p_1$ be the unknown target distribution\footnote{We use the little $p$ notation for the probability distributions appearing in FM, not to be confused with the broader statistical introduction earlier, where we use big $P$. We use \(n\) for the generic statistical discussion and \(N\) for the FM training sample size below. }. At the population level, one studies a velocity field that transports $p_0$ to $p_1$. At the first finite-sample level, one keeps $p_1$ fixed but replaces expectations in the FM objective by Monte Carlo averages. At the second level, one replaces the target law $p_1$ itself by the empirical measure
\[
\hat p_1:=\frac1N \sum_{i=1}^N \delta_{x^{(i)}},
\qquad x^{(1)},\dots,x^{(N)} \overset{\mathrm{i.i.d.}}{\sim} p_1,
\]
leading to a \emph{raw empirical} FM model. At the third level, one replaces \(p_1\) by a smoothed plug-in estimator \(\tilde p_{1,h}\) (e.g., a kernel density estimator). These replacements are mathematically distinct, and they induce different structural biases in the resulting sampler.

Our starting point is that these three regimes should not be conflated. At the population level, some FM constructions admit gradient-field velocities, a property shared by Benamou--Brenier optimal flows, though not sufficient by itself for optimality. By contrast, the exact raw empirical minimizer is a spatially weighted mixture of conditional velocity fields. Consequently, even when each conditional velocity field is itself a gradient field, the empirical minimizer typically is not. Thus the finite-sample plug-in geometry of FM differs in an essential way from its population counterpart. On the other hand, the smoothed plug-in viewpoint reveals a natural intermediate regime: for affine conditional flows with positive terminal scale, averaging conditional terminal laws over the empirical target measure gives exactly a kernel density estimator. This connects empirical flow matching (EFM) directly to classical nonparametric smoothing, together with its attendant bias--variance tradeoff and high-dimensional limitations.

One of the goals of this note is to make this picture precise. We begin with a brief statistical prelude on population, empirical, and smoothed plug-in estimation. We then review FM and conditional flow matching (CFM), derive the exact empirical minimizer for affine conditional flows, identify conditions under which this minimizer fails to be a gradient field, isolate the smoothed plug-in regime inside affine flows, and analyze the kinetic energy of the resulting samplers. We also introduce an equivalence relation  on empirical samplers: two velocities are equivalent if they induce the same divergence of probability flux \cite{horvat2024gauge} against the empirical marginal. This separates the density path from the particle dynamics realizing it. Taken together, these results show that finite-sample FM modifies the statistical target, the transport geometry, the particle-level dynamics, and the energetic behavior of the learned sampler in a coupled way. We complement the theoretical analysis with numerical experiments on exact empirical affine-flow samplers, showing that Gaussian bases produce light kinetic energy tails while Student-\(t\) bases produce notably heavier energy profiles, in agreement with the source-tail mechanism suggested by the theory.

Our main contributions are as follows.
\begin{itemize}[leftmargin=2em]
    \item We formulate and study a plug-in hierarchy for finite-sample flow matching, distinguishing objective-level empirical approximation from empirical target and smoothed empirical target plug-in models.
    \item For affine conditional flows, we derive the exact empirical minimizer and show that positive terminal scale yields a kernel density estimator at terminal time. We further prove that the raw empirical minimizer is generally not a gradient field, even when the individual conditional velocity fields are gradients, thereby identifying a finite-sample geometric obstruction to Benamou--Brenier optimality.
    \item We show that EFM samplers are not uniquely determined by their marginal
    density paths: different velocity fields can generate the same empirical density
    evolution while inducing different particle trajectories. For variance-floored
    rectified flow and Gaussian affine conditional paths, we construct explicit
    families of such equivalent samplers.
    \item We identify the source distribution as a key driver of kinetic energy
tails in EFM samplers. Gaussian sources produce light energy tails, whereas
polynomially tailed sources can produce substantially heavier ones. We prove
corresponding upper-tail bounds, show their stability under controlled
marginal-preserving velocity modifications, and explain why such growth control
is necessary. 
\end{itemize}
We further illustrate these mechanisms with toy numerical experiments.

While several ingredients used below are classical, the contribution of this note is to assemble them into a finite-sample plug-in analysis of FM and CFM. This viewpoint reveals that empirical target replacement simultaneously changes the terminal statistical target, destroys gradient structure in the empirical minimizer, leaves particle dynamics non-unique at fixed marginal path, and imposes source-dependent kinetic energy upper-tail behavior.

Throughout, we use the common shorthand of writing \(p\) both for a probability law and,
when it exists, its density with respect to Lebesgue measure. Thus expressions such as
\(X\sim p\), \(T_{\#}p_0=p_1\), and \(p_t(z)\) should be interpreted according to context:
\(p\) denotes a probability measure in sampling and pushforward statements, and a density
when evaluated at a point or integrated against Lebesgue measure. Empirical distributions are
denoted by \(\hat p\) and are understood as probability measures, not densities.
Proofs of theoretical results are deferred to the appendix.


\section{A Statistical Prelude: Population and Plug-In Estimation}
\label{sec:plugin}

Before turning to FM, it is useful to isolate the statistical template that underlies our finite-sample viewpoint. Let $(\mathcal Z,\mathcal A)$ be a measurable space, $P$ be an unknown probability measure on $\mathcal Z$, $Z_1,\dots,Z_n \overset{\mathrm{i.i.d.}}{\sim} P$, $\mathcal F$ be a hypothesis space, and $\ell:\mathcal F\times\mathcal Z\to\mathbb R$ be a measurable loss function. The population risk is
\[
R(f):=\int \ell(f,z)\,P(dz), \qquad f\in\mathcal F.
\]
Any element of
$\arg\min_{f\in\mathcal F} R(f)
$
will be called a \emph{population risk minimizer}.

Since $P$ is unknown, one cannot evaluate $R$ directly. The empirical plug-in principle replaces $P$ by the empirical measure
\[
\hat P_n:=\frac1n\sum_{i=1}^n \delta_{Z_i},
\]
which leads to the empirical risk
\[
\hat R_n(f):=\int \ell(f,z)\,\hat P_n(dz)
=\frac1n\sum_{i=1}^n \ell(f,Z_i).
\]
Any minimizer of $\hat R_n$ is the empirical risk minimization estimator.

A more regularized alternative is to replace $P$ by a smoothed estimator. When $\mathcal Z=\mathbb R^d$ and $P$ is absolutely continuous with density $p$, a standard choice is the kernel estimator
\[
\tilde p_{n,h}(z):=\frac1n\sum_{i=1}^n K_h(z-Z_i),
\qquad
K_h(z):=h^{-d}K(z/h),
\]
where $K:\mathbb R^d\to[0,\infty)$ is a kernel with $\int_{\mathbb R^d} K(z)\,dz=1$, and $\tilde P_{n,h}$ denotes the probability measure with density $\tilde p_{n,h}$. One then studies the smoothed plug-in risk
\[
\hat R_{n,h}(f):=\int \ell(f,z)\,\tilde P_{n,h}(dz).
\]

To quantify the effect of replacing $P$ by another probability measure $Q$, it is convenient to use the total variation norm of a finite signed measure\footnote{We use the signed-measure convention for total variation, so for probability measures this equals twice the usual total variation distance used in probability theory.} $\mu$,
\(
\|\mu\|_{\mathrm{TV}}
:=\sup_{\|g\|_\infty\le 1}\left|\int g\,d\mu\right|.
\)
If $|\ell(f,z)|\le M$ uniformly in $(f,z)$, then for every probability measure $Q$ on $\mathcal Z$,
\[
\left| \int \ell(f,z)\,(P-Q)(dz)\right|
\le M\,\|P-Q\|_{\mathrm{TV}}.
\]
In particular,
$|R(f)-\hat R_{n,h}(f)|
\le M\,\|P-\tilde P_{n,h}\|_{\mathrm{TV}}.$
Thus, control of the plug-in approximation at the level of measures directly yields control of the induced error in the risk functional.

The distinction between population, empirical, and smoothed plug-in estimation is classical, but it is especially useful for our purposes because an analogous trichotomy appears in FM. There, the unknown object is no longer only a risk functional but an entire target law. One may either approximate expectations under that law by Monte Carlo averages, replace the target law by the empirical measure itself, or replace it by a smoothed surrogate. The remainder of this note shows that these choices lead to genuinely different FM models, with different geometric and statistical consequences.

\section{Flow Matching (FM) and Conditional Flow Matching (CFM)} \label{sec:background}
Let $p_0$ and $p_1$ be source and target probability measures on $\mathbb R^d$, with densities denoted with the same symbol $p_0$ and $p_1$ when they exist. For instance, \(p_1\) may be the data distribution \(p^*\), or a smoothed version of it. We say that \(T\) is a transport map if \(Z\sim p_0\) implies \(T(Z)\sim p_1\), in which case we write \(T_{\#}p_0=p_1\). 
A common generative modeling paradigm aims to learn such a transport map using samples \(x^{(i)}\sim p_1\), where \(p_1\) is typically unknown  \cite{peyre2025optimal}. One popular approach under this paradigm is flow matching (FM).

\noindent {\bf FM.} The goal of FM is to find a velocity field $v: [0,1] \times  \R^d \to \R^d$, such that, if we solve the ODE:
$$\frac{dz(t)}{dt} = v(t, z(t)), \  z(0) = z_0 \in \R^d,$$ then the law of $z(1)$ when $z_0 \sim p_0$ is $p_1$ (in which case we say that $v$ drives $p_0$ to $p_1$). The law of $z(t)$ for $t \in [0,1]$ is described by  a probability path $p: [0,1] \times \R^d \to \R$, denoted $p_t(z)$, that evolves from $p_0$ at $t=0$ to $p_1$ at $t=1$. If we know $v$, then we can first sample $z_0 \sim p_0$ and then evolve the ODE from $t=0$ to $t=1$ to generate new samples.

The velocity field $v$ generates the flow $\psi: [0,1] \times \R^d \to \R^d$ given as $\psi_t(z) = z(t)$, and the probability path via the push-forward distributions: $p_t = [\psi_t]_{\#} p_0$, i.e., $\psi_t(Z) \sim p_t$ for $Z \sim p_0$. In particular, $Z \sim p_0$ implies that $\psi_1(Z) \sim p_1$, i.e., $\psi_t$ can be viewed as a dynamical transport map. The ODE corresponds to the Lagrangian description (the $v$-generated trajectories viewpoint), and a change of variable links it to the Eulerian description (the evolving probability path $p_t$ viewpoint). Indeed, under suitable regularity and integrability assumptions \cite{wald2025flow, albergo2024learning, albergo2023stochastic}, a flow generated by \(v\) induces a density path satisfying the continuity equation
\begin{equation} \label{eq_continuity}
    \frac{\partial p_t}{\partial t} + \nabla \cdot (p_t v) = 0,
\end{equation}
where $\nabla \cdot$ denotes the divergence operator. 
Conversely, sufficiently regular solutions of the continuity equation can be represented by flows solving the ODE.
This equation ensures that the flow defined by $v$ conserves the mass (or probability) described by $p_t$.  In general, even for simple prescribed probability paths between \(p_0\) and \(p_1\), the velocity field does not admit a closed-form expression when  $p_0$ and $p_1$ are known, except in special cases such as Gaussians, mixture of Gaussians and uniform distributions \cite{mena2025statistical}.

The above description gives us a population FM model, which we aim to learn using a finite number of samples in practice. Given such a $v$, it is standard to learn it with a parametric model $v_\theta$ (e.g., neural network) by minimizing the FM objective:
\begin{equation} 
L_{\text{FM}}[v_\theta] = \mathbb{E}_{t \sim \mathcal{U}[0,1], \ Z_t \sim p_t}[\|v_\theta(t, Z_t) - v(t, Z_t)\|^2]. 
\end{equation}

\noindent {\bf CFM.} In CFM \cite{lipman2022flow, tong2023improving}, we consider a  probability path in the mixture form:
\begin{equation} \label{mix_probpath}
p_t(z) = \int p_t(z |  x)\,p_1(dx), 
\end{equation}
where $p_t(\cdot | x): \R^d \to \R^+$ is a conditional probability path generated by some vector field $v(t, \cdot | x):  \R^d \to \R^d$ for $x \in \R^d$. Moreover, consider the vector field:
\begin{equation}
v(t,z)
=
\int v(t,z | x)
\frac{p_t(z | x)}{p_t(z)}
\,p_1(dx),    
\end{equation}
assuming $p_t(z)>0$.
In this setting, it can be shown in \cite{lipman2022flow} that minimizing the FM objective $L_{\text{FM}}$ is equivalent to minimizing the CFM objective:
\begin{equation}
    L_{\text{CFM}}[v_\theta]  = \mathbb{E}_{t \sim \mathcal{U}[0,1], \ X \sim p_1, \ Z_t \sim p_t(\cdot | X)}[\|v_\theta(t, Z_t) - v(t, Z_t| X)\|^2].
\end{equation}

In order to apply CFM, we need to specify the boundary distributions $p_0$ and $p_1$, and the conditional probability path $p_t(z|x)$. Below are some examples.

\begin{example}[Rectified Flow] \label{ex_RF}
    A canonical choice \cite{liu2022flow} is $p_0 = \mathcal{N}(0, I_d)$, $p_1 = p^*$, and 
    \begin{equation} \label{eq_probpath_RF}
         p_t(z | X = x_1) = \mathcal{N}(z; t x_1, (1-t)^2 I_d),
    \end{equation}
    which corresponds to the conditional velocity field $v(t, z| X=x_1) = \frac{x_1 - z}{1-t}$. This conditional probability path realizes linear interpolating paths of the form $Z_t = (1-t) x_0 + t x_1$ between a (reference) Gaussian sample $x_0$ and a data sample $x_1$. In practice,  regularized versions of rectified flow are preferred for numerical stability  (since $v$ blows up as $t \to 1$). A simple version is to modify the conditional probability path to $$p_t(\cdot | X = x_1) = \mathcal{N}(t x_1, (1-(1-\sigma_{min})t)^2 I_d),$$ for some small $\sigma_{min} > 0$, which corresponds to the regularized conditional velocity field $v(t, z| X=x_1) = \frac{x_1 - (1-\sigma_{min})z}{1-(1-\sigma_{min})t}$. Another version is to  consider a smoothed version of the data distribution $p^*$; e.g., $p_1 = p^* \star \mathcal{N}(0, \sigma_{min}^2 I_d)$, where $\star$ denotes convolution. Variance flooring modifies the conditional path, whereas replacing \(p^\ast\) by \(p^\ast\ast N(0,\sigma_{\min}^2I_d)\) changes the terminal target law.
\end{example}

\begin{example}[Affine  Flows] \label{ex_gen}
More generally, consider a latent variable $Z \sim \mathbb{Q}$ with positive probability density function (PDF) $K > 0$ (not necessarily Gaussian) and, for $t \in [0,1]$,  the affine conditional flow defined by $\psi_t(Z|X) = m_t(X) + \sigma_t(X) Z$ for some time-differentiable functions $m: [0,1] \times \R^d \to \R^d$ and $\sigma: [0,1] \times \R^d \to \R^+$. Since $\psi_t$ is linear in $Z$, we can obtain its density via the change of variables:
\begin{equation} \label{eq_probpath_affine}
    p_t(z|X) = \frac{1}{\sigma_t^d(X)} K\left(\frac{z-m_t(X)}{\sigma_t(X)} \right).
\end{equation}
Here \(\sigma_t(X)\) is a positive scalar scale. Matrix-valued affine maps would require a matrix-valued coefficient \(a_t\) and are not considered here.

Then, as in Theorem 3 in \cite{lipman2022flow}, we can show that the unique vector field that defines $\psi_t(\cdot | X)$ via the ODE $\frac{d}{dt} \psi_t(z|X) = v(t, \psi_t(z|X) | X)$ has the form: 
\begin{equation}
    v(t, z| X) = a_t(X) z + b_t(X),
\end{equation}
where 
\begin{align} \label{eq_ab}
    a_t(X) &= \frac{\frac{\partial \sigma_t}{\partial t}(X)}{\sigma_t(X)}, \quad 
    b_t(X) = \frac{\partial m_t}{\partial t}(X) - m_t(X) a_t(X).
\end{align}
This family of flows is also studied in \cite{kunkel2025minimax}. The rectified flow in the previous example is a special case of this family of conditional flows (with $K = \mathcal{N}(0, I_d)$,  $m_t(X) = tX$ and $\sigma_t(X) = 1-t$). The Gaussian flows considered in \cite{lipman2022flow, tong2023improving, albergo2023stochastic} are also special cases. 
\end{example}

All the formulations thus far are in the idealized continuous-time setting. In practice, we work with  Monte Carlo estimates of the objective and use the optimized $v_\theta$ to generate new samples by simulating the ODE with a numerical scheme. Note, however, that the training of CFM is simulation-free: the dynamics are only simulated at inference time and not when training the parametric (neural network) model. In practice, affine flows are most widely used, and thus we will focus on them here, using the rectified flow model as a canonical example.

\section{Empirical and Smoothed Plug-in Flow Matching}  \label{sec:empiricalfm}

Suppose that we are given a source distribution $p_0$ and $N$ i.i.d. samples $x^{(1)},\dots,x^{(N)} \sim p_1$, so that the target law is observed only through finite data. At this point it is useful to distinguish three levels of approximation.

\begin{enumerate}[label=(\roman*),leftmargin=2.5em]
    \item \textbf{Objective-level empirical plug-in.} One keeps the target law $p_1$ conceptually fixed, but replaces expectations appearing in $L_{\mathrm{FM}}$ or $L_{\mathrm{CFM}}$ by Monte Carlo averages.
    \item \textbf{Raw empirical target plug-in.} One replaces the target law itself by the empirical distribution
    \[
    \hat p_1 := \frac1N \sum_{i=1}^N \delta_{x^{(i)}}.
    \]
    This is the most singular finite-sample surrogate of $p_1$.
    \item \textbf{Smoothed empirical target plug-in.} One instead uses a regularized estimator $\tilde p_{1,h}$, for example a kernel density estimator. This is the natural nonparametric counterpart of replacing the empirical measure by a smoothed plug-in estimator in classical statistics.
\end{enumerate}

We shall begin our study with the raw empirical target plug-in, since it leads to closed-form expressions and exposes the main geometric bias. We then explain how the same formalism naturally produces smoothed plug-in targets.

When $p_1$ is replaced by the empirical measure $\hat p_1$, the empirical counterparts of $p_t(z)$ and $v(t,z)$ are given by
\begin{align}
    \hat{p}_t(z) &= \frac{1}{N} \sum_{i=1}^N p_t(z|x^{(i)}), \\
    \hat{v}(t,z) &= \sum_{i=1}^N v(t,z| x^{(i)}) \frac{p_t(z|x^{(i)})}{\sum_{j=1}^N p_t(z|x^{(j)})}   
\end{align}
respectively. The objectives that the empirical FM and empirical CFM minimize are then given by, respectively:
\begin{align}
    \hat{L}_{\text{FM}}[v'] &= \mathbb{E}_{t \sim \mathcal{U}[0,1],  \ Z_t \sim \hat{p}_t}[\|v'(t, Z_t) - \hat{v}(t, Z_t)\|^2], \\ 
    \hat{L}_{\text{CFM}}[v'] &= \mathbb{E}_{t \sim \mathcal{U}[0,1], \ X \sim \hat{p}_1, \ Z_t \sim p_t(\cdot | X)}[\|v'(t, Z_t) - v(t, Z_t| X)\|^2] \nonumber \\
    &= \frac{1}{N} \sum_{i=1}^N \mathbb{E}_{t \sim \mathcal{U}[0,1], \ Z_t \sim p_t(\cdot | x^{(i)})}[\|v'(t, Z_t) - v(t, Z_t| x^{(i)})\|^2],
\end{align}
where $p_t(\cdot|x^{(i)})$ is the conditional probability path (given by, e.g., \eqref{eq_probpath_affine} or \eqref{eq_probpath_RF}).

One can show that if $v(t, \cdot | x^{(i)})$ generates $p_t(\cdot | x^{(i)})$ for all $i \in [N]$, then $\hat{v}(t, \cdot)$ generates $\hat{p}_t$ (see Lemma 2.1 in \cite{kunkel2025minimax}).
Just as before, the equivalence (with respect to the optimizing arguments) between FM and CFM carries over to empirical FM and empirical CFM naturally (see Theorem 2.2 in \cite{kunkel2025minimax}). Moreover, over an unrestricted square-integrable function class, the examples of conditional probability paths considered earlier admit a closed-form minimizer $\hat{v}^* \in \text{argmin}_v \hat{L}_{CFM}[v] = \text{argmin}_v \hat{L}_{FM}[v]$, giving us a training-free model for generating new samples. This sampler is described by the ODE: 
\begin{equation} \label{eq_closedformsampler}
    \frac{d \hat{z}^*(t)}{dt} = \hat{v}^*(t, \hat{z}^*(t)), \quad \hat{z}^*(0) \sim p_0,
\end{equation}
which we evolve to terminal time in regularized cases, or to \(T<1\) in singular unregularized cases.

\begin{example}[Empirical Rectified Flow] For the rectified flow example in Example \ref{ex_RF}, the minimizer $\hat{v}^*$ has a closed-form formula (see \cite{bertrand2025closed} for derivation):
\begin{equation} \label{eq_empRF}
    \hat{v}^*(t, z) = \sum_{i=1}^N w_i(t, z) \frac{x^{(i)}-z}{1-t},
\end{equation}
where 
\[
w_i(t,z)
=
\frac{\exp\!\left(-\|z-tx^{(i)}\|^2/[2(1-t)^2]\right)}
{\sum_{j=1}^N \exp\!\left(-\|z-tx^{(j)}\|^2/[2(1-t)^2]\right)},
\]
or equivalently,
$w_i(t, z) = \text{softmax}_i\left(\left(-\frac{1}{2(1-t)^2}\|z- t x^{(j)}\|^2  \right)_{j \in [N]} \right)$, with  $\text{softmax}_i$ denoting the $i$th component of the vector obtained after applying the softmax operation. This empirical minimizer is thus a time-dependent weighted average of the $N$ different directions towards the $x^{(i)}$. Similar formula can also be obtained for regularized versions of rectified flow.
\end{example}

\begin{example}[Empirical Affine Flows and Smoothed Plug-in Targets]  \label{eg_empaffineflows}
The affine family also exhibits the smoothed plug-in regime in a particularly transparent way. Fix a  PDF $K > 0$ on $\R^d$, take $p_0(z)=K(z)$, and choose any $m_t$ and $\sigma_t$ such that
\[
m_0(X)=0, \qquad m_1(X)=X, \qquad \sigma_0(X)=1, \qquad \sigma_1(X)=\sigma_{\min}>0.
\]
Then the terminal conditional density is
\[
p_1(z\mid X=x^{(i)}) = \frac{1}{\sigma_{\min}^d}K\!\left(\frac{z-x^{(i)}}{\sigma_{\min}}\right),
\]
and averaging over the empirical target law yields the terminal marginal
\[
\tilde p_1(z)=\frac1N\sum_{i=1}^N p_1(z\mid X=x^{(i)})
=\frac{1}{N \sigma_{\min}^d} \sum_{i=1}^N K\!\left(\frac{z-x^{(i)}}{\sigma_{\min}}\right).
\]
Thus the terminal law is exactly the equally weighted kernel density estimator associated with kernel $K$ and bandwidth $\sigma_{\min}$. In particular, the affine-flow construction already contains a smoothed plug-in estimator of the target distribution. If $K$ is the standard Gaussian density, then this family converges formally to the rectified flow regime as the terminal bandwidth $\sigma_{\min} \downarrow 0$.
\end{example}

Moreover, similar to the empirical rectified flow, we can obtain a closed-form formula for the raw empirical target affine-flow minimizer.

\begin{proposition} \label{prop1} 
For the family of affine flows in Example \ref{eg_empaffineflows}, the minimizer of the empirical FM objective over
\(L^2(dt\,\hat p_t(dz);\mathbb R^d)\) is unique \(dt\otimes \hat p_t\)-a.e. and, for a.e. \(t\), is given \(\hat p_t\)-a.e. by the closed-form formula:
    \begin{equation}
        \hat{v}^*(t, z) = \sum_{i=1}^N w_i(t,z) \cdot (a_t(x^{(i)}) z + b_t(x^{(i)})),
    \end{equation}
where $a_t$ and $b_t$ are given in \eqref{eq_ab}, and $w_i(t,z)$ is the kernel-dependent weighting function
\begin{equation}
    w_i(t,z) = \frac{p_t(z|x^{(i)})}{\sum_{j=1}^N p_t(z|x^{(j)})},
\end{equation}
with 
\begin{equation}
    p_t(z|x^{(i)}) = \frac{1}{\sigma_t^d(x^{(i)})} K\left(\frac{z-m_t(x^{(i)})}{\sigma_t(x^{(i)})} \right).
\end{equation}
\end{proposition}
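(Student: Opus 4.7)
The plan is to derive the closed form by combining the FM/CFM equivalence with the standard characterization of the FM minimizer as a conditional expectation (regression function). First, I would invoke the equivalence $\mathrm{argmin}_v \hat{L}_{\text{FM}}[v] = \mathrm{argmin}_v \hat{L}_{\text{CFM}}[v]$ mentioned in the text (Theorem 2.2 of \cite{kunkel2025minimax}), so it suffices to compute the minimizer of the empirical CFM objective. Expanding the square and using that the inner norm-squared is a quadratic functional of $v'(t,z)$ whose minimizer at each $(t,z)$ is the conditional mean, I would argue that
\begin{equation*}
\hat{v}^*(t,z) \;=\; \mathbb{E}\!\left[v(t, Z_t \mid X) \;\middle|\; Z_t = z\right],
\end{equation*}
where $X \sim \hat{p}_1$ and $Z_t \mid X \sim p_t(\cdot \mid X)$. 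This is the standard denoising/regression identity; it is exactly the step where the weighting $w_i$ will appear.

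Next, I would carry out the Bayes computation for this finite mixture. Since $\hat{p}_1 = \frac{1}{N}\sum_i \delta_{x^{(i)}}$ assigns mass $1/N$ to each sample, the joint law of $(X, Z_t)$ is discrete-continuous, and Bayes' rule gives
\begin{equation*}
\Pr(X = x^{(i)} \mid Z_t = z) \;=\; \frac{p_t(z \mid x^{(i)}) \cdot (1/N)}{\frac{1}{N}\sum_{j=1}^N p_t(z \mid x^{(j)})} \;=\; \frac{p_t(z \mid x^{(i)})}{\sum_{j=1}^N p_t(z \mid x^{(j)})} \;=\; w_i(t,z),
\end{equation*}
which is precisely the weight stated in the proposition. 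Plugging the affine conditional velocity field $v(t, z \mid X) = a_t(X)\,z + b_t(X)$ from Example \ref{ex_gen} into the conditional expectation and using that $Z_t = z$ is held fixed so $a_t(X)z$ is linear in the coefficient $a_t(X)$, I obtain
\begin{equation*}
\hat{v}^*(t,z) \;=\; \sum_{i=1}^N w_i(t,z)\,\bigl(a_t(x^{(i)})\,z + b_t(x^{(i)})\bigr),
\end{equation*}
which is the claimed formula, with $a_t, b_t$ given by \eqref{eq_ab} and $p_t(z \mid x^{(i)})$ by the change-of-variables formula \eqref{eq_probpath_affine}.

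The proof is mostly bookkeeping once the regression identity for the FM minimizer is in place; the main conceptual obstacle is justifying that pointwise minimization of the integrand in $\hat{L}_{\text{FM}}$ does give a true minimizer over a sufficiently rich class of $v'$ (and that the minimum is attained), which follows because $\|v'(t,z) - \hat{v}(t,z)\|^2$ already decomposes the CFM loss via the law of total variance, so $\hat{v}^* = \hat{v}$ and $\hat{v}$ has the explicit mixture form written above. A minor technical point is to ensure $\sum_j p_t(z \mid x^{(j)}) > 0$ on $\mathbb{R}^d$ so the weights $w_i(t,z)$ are well-defined; this is automatic for the kernels $K$ of interest (e.g.\ Gaussian), which are strictly positive, and for any $t \in [0,1)$ where $\sigma_t(x^{(i)}) > 0$.
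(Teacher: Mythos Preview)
Your proposal is correct and follows essentially the same approach as the paper's own proof: identify the empirical FM/CFM minimizer as the conditional expectation $\mathbb{E}[v(t,z\mid X)\mid Z_t=z]$, apply Bayes' rule with $X\sim\hat p_1$ to obtain the posterior weights $w_i(t,z)$, and substitute the affine conditional velocity $v(t,z\mid x^{(i)})=a_t(x^{(i)})z+b_t(x^{(i)})$. Your additional remarks on the law-of-total-variance justification and on positivity of $K$ ensuring the weights are well-defined are minor elaborations, not a different route.
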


Intuitively, $\hat{v}^*$ is a convex combination of the individual conditional velocity fields $v(t, z| x^{(i)})$, weighted by $w_i(t, z)$, where $w_i(t,z)$ represents the posterior responsibility that the point $z$ at time $t$ originated from the $i$th conditional path.

\section{Structural and Energetic Biases of EFM Samplers} \label{sec:energy}

We now analyze the geometric and energetic consequences of the raw empirical target plug-in. The first issue is structural: does the exact empirical minimizer retain the gradient-field property associated with optimal transport (OT) in some population models? The second issue is energetic: regardless of optimality, what can be said about the kinetic energy of the resulting trajectories?

\subsection{Background}
We begin by recalling the OT benchmark with which these questions are naturally aligned.

\noindent {\bf Optimal Transport.} OT is the problem of efficiently moving probability mass from a source distribution $p_0$ to a target distribution $p_1$ such that a given cost function has minimal expected value. More precisely, we aim to find a coupling $(Z_0, Z_1)$ of random variables $Z_0 \sim p_0$ and $Z_1 \sim p_1$ such that the expected cost $\mathbb{E}[c(Z_0, Z_1)]$ is minimal, where $c$ is a cost function, typically chosen as $c_1(z_0, z_1) := \|z_0 - z_1\|$  or $c_2(z_0, z_1) := \|z_0 - z_1\|^2$ \cite{chewi2024statistical, peyre2025optimal}. 

The Monge map (or OT map) $T_0$ is the transport map that minimizes $\mathbb{E}_{p_0}[c_2(Z_0, T(Z_0))]$. The squared 2-Wasserstein distance $W_2^2(p_0, p_1)$ is defined by the minimum expected squared distance over all couplings:$$W_2^2(p_0, p_1) := \inf_{\gamma \in \Pi(p_0, p_1)} \mathbb{E}_{ (Z_0, Z_1) \sim \gamma } [\|Z_0 - Z_1\|^2] = \inf_{\gamma \in \Pi(p_0, p_1)} \int \|x - y\|^2 d\gamma(x,y),$$ 
where $\Pi(p_0, p_1)$ is the set of all joint probability distributions with marginals $p_0$ and $p_1$. 
Under suitable conditions, for instance when \(p_0\) is absolutely continuous and \(p_0,p_1\in\mathcal P_2(\mathbb R^d)\), this minimum is achieved by a Monge map \(T_0\), such that $W_2^2(p_0, p_1) = \mathbb{E}_{Z_0 \sim p_0}[\|Z_0 - T_0(Z_0)\|^2]$.  The Wasserstein distance $W_2$ defines a metric on $\mathcal{P}_2(\R^d)$, the space of probability measures on $\R^d$ with finite second moment.

If \(p_0\) is absolutely continuous and \(p_0,p_1\in\mathcal P_2(\mathbb R^d)\), then Brenier's theorem gives a unique \(p_0\)-a.e. optimal map \(T_0=\nabla\Phi\) for a convex function \(\Phi\). More precisely, 
let $\mathcal{T}(p_0, p_1) := \{T: \R^d \to \R^d : T_{\#} p_0 = p_1\}$. The following is a key result in OT theory due to Brenier (see, e.g.,  Chapter 3 in \cite{villani2021topics}, \cite{manole2024plugin}): there exists a unique (up to a $p_0$-negligible set)  minimizer $T_0$ to the Monge problem:
    $$d_{\mathrm{Monge}}(p_0, p_1)^2 := \inf_{T \in \mathcal{T}(p_0, p_1)} \int \|x - T(x)\|^2 dp_0(x) $$
such that $d_{\mathrm{Monge}}(p_0, p_1)^2 = W_2^2(p_0, p_1)$. Moreover, $T_0$ can be represented ($p_0$-almost everywhere) as $T_0 = \nabla \Phi$ for some convex function $\Phi: \R^d \to \R$ (this $T_0$ is  the OT map).

\noindent {\bf Dynamical Representation (Benamou-Brenier Formulation).} Like any sufficiently regular transport map, OT map can be expressed in a dynamic form as a continuous flow from the source distribution $p_0$ to the target distribution $p_1$ \cite{benamou2000computational, chen2021stochastic}. Consider a flow $\psi_t(z)$ defined by the ODE:
$$\frac{\partial}{\partial t} \psi_t(z) = v(t, \psi_t(z)), \quad \text{for all } t \in [0, 1],$$
for a velocity field $v(t, z)$, with the initial condition $\psi_0(z) = z$. The flow $\psi_t$  induces a probability path, $p_t = [\psi_t]_{\#} p_0$, in the Wasserstein space \cite{wald2025flow}.

Let $\mathcal{U}$ be the collection of all velocity fields $v$ such that the flow $\psi_t(z)$ is uniquely defined and transports $p_0$ to $p_1$ over the unit time interval. The OT  map $T_0(z)$ is given by the end-point of the optimal flow: $T_0(z) = \psi_1^{\text{OT}}(z)$, where the associated optimal velocity field $v^{\text{OT}}(\cdot, \cdot)$ is the minimizer of the expected kinetic energy\footnote{This is also, up to a multiplicative constant involving $d$, the kinetic energy considered in \cite{shaul2023kinetic}.}:
$$\mathbb{E} \left[ \int_0^1 \|v(t, \psi_t(Z_0))\|^2 dt \right]$$
over all $v \in \mathcal{U}$. This minimal expected energy is equal to the squared 2-Wasserstein distance $W_2^2(p_0, p_1)$. Importantly,  the $W_2$ optimal velocity field $v^{\text{OT}}$ must be irrotational (curl-free), meaning that $v^{\text{OT}}(t, z) = -\nabla_z  \Phi(t, z)$ for some scalar potential $\Phi$ (otherwise, intuitively the curl component would introduce unnecessary looping or rotational motion, which would increase the total cost); see also Theorem 8.3.1 in \cite{ambrosio2005gradient}. 

If $p_t$ denotes the density of the distribution at time $t$ (i.e., the law of $\psi_t(Z_0)$), the optimal solution must satisfy the continuity equation (which ensures mass conservation):
$$\partial_t p_t + \nabla \cdot (v^{\text{OT}} p_t) = 0.$$
Hence, the optimization problem (Benamou-Brenier formulation) can be written in its Eulerian form, and minimizes the total kinetic energy over all admissible paths:
$$\inf_{v, p} \int_0^1 \int_{\mathbb{R}^d}  \|v(t, z)\|^2 p_t(z) \, dz \, dt$$$$\text{subject to } \quad \partial_t p_t + \nabla \cdot (v_t p_t) = 0,$$
with the boundary conditions  $p_0$ (at $t=0$) and  $p_1$ (at $t=1$). 

\noindent {\bf Empirical Continuity Equation.} 
Now, the empirical counterpart of the continuity equation \eqref{eq_continuity} is:
\begin{equation} \label{eq_emp_continuity}
    \frac{\partial \hat{p}_t}{\partial t} + \nabla \cdot (\hat{p}_t \, \hat{v}(t,\cdot)) = 0.
\end{equation}
In particular, the empirical minimizer satisfies $\hat v^{*}(t,z)=\hat v(t,z)$ pointwise, and hence the pair $(\hat p_t,\hat v^{*})$ also satisfies \eqref{eq_emp_continuity}.


It is natural to ask if the $\hat{v}^*$ (the velocity field that a trainable CFM model is really optimizing for) in \eqref{eq_empRF} and Proposition \ref{prop1} corresponds to an optimal velocity field in the OT sense. 
In fact, except for special cases, even the velocity fields $v_t$ arising from the population FM framework are generally not gradient functions \cite{wald2025flow, liu2022rectified}, thus not optimal in the OT sense. Indeed, OT paths are generally outside the class of probability paths with affine conditionals. Since affine conditionals are of particular interest due to the fact that they enable scalable training, \cite{shaul2023kinetic} studied the kinetic optimal path within this class of paths using a proxy for the kinetic energy.

The following example gives a special case in which we have velocity fields which can be represented as gradient fields. We will look at the empirical case later.

\begin{example}[The Population RF Regression Minimizer Can Be a Gradient Field] \label{ex_gf}
If the joint distribution of the source and target is a product distribution, i.e., $p_{0,1} = p_0 \times p_1$ (independent coupling), then for the interpolating path of the rectified flow $Z_t = (1-t) x_0 + tx_1$, $x_0 \sim \mathcal{N}(0,I_d)$, and $p_1 \in \mathcal{P}_2(\R^d)$, the population regression minimizer can be shown to be the conditional expectation \cite{wald2025flow, zhang2024flow}:
\begin{equation}
    v(t,z) = \mathbb{E}_{x_0 \sim p_0, \ x_1 \sim p_1}[x_1 - x_0 | Z_t = z] =  \nabla_z \Phi(t, z), 
\end{equation}
where 
\begin{equation} \label{eq_Phi}
    \Phi(t,z) = \frac{1}{2t} \|z\|^2 + \frac{1-t}{t} \log p_t(z),
\end{equation}
for $t \in (0,1)$.
We also see that the score function is related to the velocity by: $\nabla_z \log p_t(z) = \frac{t}{1-t} v(t,z) - \frac{1}{1-t} z.$ An analogous formula can also be derived for a more general flow  with $Z_t = \alpha_t x_0 + \beta_t x_1$ for some time-differentiable $\alpha_t$, $\beta_t$ such that $\alpha_0 = \beta_1 = 1$ and $\alpha_1 = \beta_0 = 0$. This tells us that the rectified flow's regression minimizer, under the independent coupling, is a gradient field (but does not generally give us an OT map due to the independent coupling assumption; being a gradient field is necessary but not sufficient for OT).
\end{example}

Let us consider Gaussian distributions for $p_0$ and $p_1$, in which case  the OT map can be computed explicitly \cite{delon2020wasserstein}.

\begin{example}[Explicit Examples; See  \cite{mena2025statistical}] \label{ex_gaussianRF}
Take $p_0 = \mathcal{N}(0, \Sigma_0)$, $p_1 = \mathcal{N}(m_1, \Sigma_1)$ and consider the rectified flow (RF) map, denoted $R(x) := x + \int_0^1 v(t, \psi_t(x)) dt$ with $v = \dot{\psi}_t$, where $\psi_t(x) = (1-t) x + t R(x)$ is the displacement interpolation between the independent Gaussians $X_0 \sim p_0$ and $X_1 \sim p_1$. 

If $\Sigma_0 = I_d$, then  Monge's OT map and the RF map between $X_0$ and $X_1$  coincide: $T_0(x) = m_1 + \Sigma_1^{1/2} x = R(x)$. In this Gaussian case, the population RF map can be computed explicitly. However, if $\Sigma_0 \neq I_d$, then the two maps are not equivalent. 
\end{example}

\noindent {\bf Raw empirical target plug-in generally destroys gradient structure.}
A crucial observation is that even if the relevant population velocity is a gradient field, the exact raw empirical target plug-in minimizer is generally not. The obstruction is entirely due to the spatially varying posterior weights $w_i(t,z)$ appearing in Proposition \ref{prop1}. This is the main content of the following proposition.

    \begin{proposition} \label{prop2}
    Assume \(d\ge2\).
    Let the empirical target distribution be $\hat{p}_1 = \frac{1}{N} \sum_{i=1}^N \delta_{x^{(i)}}$. Consider the family of empirical affine flows defined by the conditional probability paths $p_t(z|x^{(i)})$ and their corresponding conditional velocity fields $v_i(t,z) := v(t, z| x^{(i)}) = a_t(x^{(i)}) z + b_t(x^{(i)})$ from Proposition \ref{prop1}. Assume that, for each fixed \(t\in[0,T]\), where \(T<1\) in the unregularized rectified-flow case, and \(T=1\) is allowed in variance-floored cases, the weight functions $z\mapsto w_i(t,z)$ are continuously differentiable. Then, the vector field $z\mapsto \hat{v}^*(t, z)$ is a gradient field on $\R^d$ if and only if
$$\sum_{i=1}^N \left( v_i(t, z) \nabla_z w_i(t, z)^\top - \nabla_z w_i(t, z) v_i(t, z)^\top \right) = 0 \quad \text{for all } z\in\R^d.$$

    \end{proposition}

In general, this identity is not expected to hold except in special symmetric or degenerate
configurations; explicit counterexamples can be constructed already in \(d=2\). Thus, wherever the Benamou--Brenier optimal velocity is characterized by a gradient field, the empirical minimizer cannot coincide with it unless the skew-symmetric condition vanishes. Intuitively, this says that even if every individual conditional flow is a straight line (gradient field), their weighted sum is not generally a gradient field because the weights $w_i(t,z)$ vary spatially (dependent on $z$).

An important consequence of Proposition \ref{prop2}, together with Proposition \ref{prop1}, is that the ideal empirical target velocity for neural CFM training is generally not a gradient field, even if the underlying population construction is formulated to be one.

\subsection{An Equivalent Class of Empirical Samplers}
\label{subsec:flux_equiv}

The preceding non-gradient result concerns the particular velocity field selected by the EFM square-loss objective. At the level of marginal density evolution, however, this representative is not unique. We now make this non-uniqueness explicit using probability fluxes. 

Let \(p_t\) be a smooth positive density on \(\R^d\). For a vector field
\(v_t\in L^2(p_t;\R^d)\), we define its \emph{probability flux}, or
probability current, by
\[
    j_t := p_t v_t .
\]
Here
\(
    L^2(p_t;\R^d)
    :=
    \left\{
    v:\R^d\to\R^d:
    \int_{\R^d}\|v(z)\|^2p_t(z)\,dz<\infty
    \right\}.
\)
With this notation, the continuity equation is
\[
    \partial_t p_t+\nabla\cdot j_t=0,
    \qquad\text{equivalently}\qquad
    \partial_t p_t+\nabla\cdot(p_t v_t)=0.
\]

We will use divergences in the weak, or distributional, sense. Since
\(v_t\in L^2(p_t;\R^d)\), the current \(p_t v_t\) belongs to
\(L^1_{\rm loc}(\R^d;\R^d)\). Hence \(\nabla\cdot(p_t v_t)\) is
well-defined as a distribution. In particular,
\[
    \nabla\cdot(p_t v_t)=0
    \quad\text{in }\mathcal D'(\R^d)
\]
means that
\[
    \int_{\R^d} p_t(z)v_t(z)\cdot\nabla\varphi(z)\,dz=0
\]
for every test function \(\varphi\in C_c^\infty(\R^d)\).

For fixed \(p_t\), define the \emph{flux-null remainder space}
\[
    \mathcal R_{p_t}
    :=
    \left\{
    r\in L^2(p_t;\R^d):
    \nabla\cdot(p_t r)=0
    \text{ in }\mathcal D'(\R^d)
    \right\}.
\]
Equivalently,
\[
    r\in\mathcal R_{p_t}
    \quad\Longleftrightarrow\quad
    \int_{\R^d}p_t(z)r(z)\cdot\nabla\varphi(z)\,dz=0
    \quad
    \forall \varphi\in C_c^\infty(\R^d).
\]
We call such remainders \emph{flux-null}, since they generate a probability
current \(p_t r\) with zero divergence. When \(p_t\) and \(r\) are smooth and
\(p_t>0\), this is equivalently the weighted divergence-free condition
\[
    \nabla\cdot r+r\cdot\nabla\log p_t=0.
\]
This condition is analogous to the gauge freedom studied for diffusion models~\cite{horvat2024gauge}, where non-conservative remainders can preserve the same marginal evolution under suitable flux conditions. Here, it describes non-uniqueness of particle
dynamics along a fixed EFM marginal path.

We say that two velocity fields \(u_t,v_t\in L^2(p_t;\R^d)\) are
\emph{flux-equivalent} with respect to \(p_t\), and write
\(u_t\sim_{p_t}v_t\), if
\[
    \nabla\cdot(p_tu_t)=\nabla\cdot(p_tv_t)
    \quad\text{in }\mathcal D'(\R^d).
\]
Equivalently,
\(
    u_t-v_t\in\mathcal R_{p_t}.
\)
The relation \(\sim_{p_t}\) is an equivalence relation, since it is
defined by equality of distributional divergences. Its equivalence class
at \(v_t\) is
\[
    [v_t]_{p_t}
    =
    \left\{
    u_t\in L^2(p_t;\R^d):
    u_t\sim_{p_t}v_t
    \right\}
    =
    v_t+\mathcal R_{p_t}.
\]
Thus flux equivalence identifies velocity fields that induce the same
marginal density evolution while allowing different particle trajectories. For a time-dependent path \(p=(p_t)_{t\in[0,T]}\), we write
\(u_\cdot\sim_p v_\cdot\) if \(u_t\sim_{p_t}v_t\) for a.e.
\(t\in[0,T]\).

The following result is a natural consequence of the above formulation. 

\begin{proposition}[Flux-equivalent empirical samplers]
\label{prop:flux_equiv}
Fix a finite time horizon \(T>0\). Let \((\hat p_t)_{t\in[0,T]}\) be a smooth positive empirical marginal path and suppose that \(\hat v_t\) satisfies
\[
    \partial_t\hat p_t+\nabla\cdot(\hat p_t\hat v_t)=0.
\]
If \(r_t\in\mathcal R_{\hat p_t}\) for a.e. \(t\), then
\[
    u_t=\hat v_t+r_t
\]
satisfies
\[
    \partial_t\hat p_t+\nabla\cdot(\hat p_tu_t)=0.
\]
Consequently, \(u_t\) and \(\hat v_t\) generate the same empirical marginal path at the level of the continuity equation. If the corresponding ODE flows are well posed and the continuity equation is unique in the chosen class, then both flows push \(\hat p_0\) forward to \(\hat p_t\).
\end{proposition}

The proposition should be read as a statement about the Eulerian marginal path. Flux-equivalent samplers may have different Lagrangian particle trajectories, different numerical stiffness, and different kinetic energies, even though their one-time marginals agree.

The notation \(r_t\) is chosen to emphasize that these fields are remainder directions: they change the velocity field while contributing a divergence-free probability current \(\hat p_t r_t\). Thus they change particle trajectories without changing the marginal density evolution. This condition is closely related to the gauge freedom condition for diffusion models studied in \cite{horvat2024gauge} (see also the related work cited there); here we only use the elementary flux interpretation and formalize this condition.

\paragraph{Projection onto gradient fields.}
The next observation gives a canonical
representative from a flux-equivalence class. The flux-null remainder space is the orthogonal complement of gradient fields in \(L^2(p_t;\R^d)\). Let
\[
    \mathcal G_{p_t}
    :=
    \overline{\{\nabla\phi:\phi\in C_c^\infty(\R^d)\}}^{L^2(p_t)}.
\]

Integration by parts gives
\[
    \langle r,\nabla\phi\rangle_{p_t}
    =
    \int r\cdot\nabla\phi\,p_t\,dz
    =
    -\int \phi\,\nabla\cdot(p_t r)\,dz.
\]
Since \(\mathcal G_{p_t}\) is closed by definition, the Hilbert projection theorem gives an orthogonal decomposition of \(L^2(p_t;\mathbb R^d)\) into \(\mathcal G_{p_t}\) and \(\mathcal G_{p_t}^{\perp}\). 
Moreover, by the weak
definition of divergence,
\[
    r\in\mathcal R_{p_t}
    \iff
    \int_{\mathbb R^d} p_t(z)r(z)\cdot\nabla\varphi(z)\,dz=0
    \quad
    \forall \varphi\in C_c^\infty(\mathbb R^d).
\]
Hence \(\mathcal R_{p_t}=\mathcal G_{p_t}^{\perp}\), and every
\(v_t\in L^2(p_t;\mathbb R^d)\) admits the orthogonal decomposition
\[
    v_t=P_{\mathcal G_{p_t}}v_t+P_{\mathcal R_{p_t}}v_t.
\]

When the projection onto gradient fields has a smooth potential, we write \(P_{\mathcal G_{p_t}}v_t=\nabla\phi_t\), and so
\[
    v_t=\nabla\phi_t+r_t,
    \qquad r_t\in\mathcal R_{p_t},
\]
where \(\phi_t\) solves
\[
    \nabla\cdot(p_t\nabla\phi_t)=\nabla\cdot(p_t v_t)
\]
in the weak sense. The field \(\nabla\phi_t\) is the minimum kinetic energy representative of the fixed equivalence class \([v_t]_{p_t}\). Indeed, any other representative has the form \(\nabla\phi_t+r\) with \(r\in\mathcal R_{p_t}\), and orthogonality gives
\[
    \|\nabla\phi_t+r\|_{L^2(p_t)}^2
    =\|\nabla\phi_t\|_{L^2(p_t)}^2+\|r\|_{L^2(p_t)}^2.
\]

This fixed-path statement should not be confused with the full Benamou--Brenier problem. The latter optimizes over both \((p_t)\) and \((v_t)\). Here the empirical path \((\hat p_t)\) is fixed, and we only consider optimizing over velocity representatives that realize the same path.

\paragraph{Explicit flux-null corrections for Gaussian empirical paths.}
For Gaussian empirical affine paths,  one can construct a useful
subfamily of flux-null directions explicitly. Here we allow Gaussian affine paths with matrix-valued covariances.

\begin{proposition}[Explicit Gaussian flux-null corrections]
\label{prop:gaussian_flux_null}
Fix \(t\) and suppose that the empirical marginal density is a finite Gaussian
mixture
\[
    \hat p_t(z)
    =
    \frac1N\sum_{i=1}^N p_i(t,z),
    \qquad
    p_i(t,z)=\mathcal N(z;m_i(t),\Sigma_i(t)),
\]
where each \(\Sigma_i(t)\) is symmetric positive definite. Define
\[
    w_i(t,z)
    =
    \frac{p_i(t,z)}{\sum_{j=1}^N p_j(t,z)}.
\]
For any collection of antisymmetric matrices \(A_i(t)^\top=-A_i(t)\), define
\[
    r_t^A(z)
    :=
    \sum_{i=1}^N
    w_i(t,z)\,\Sigma_i(t)A_i(t)(z-m_i(t)).
\]
If \(r_t^A\in L^2(\hat p_t;\mathbb R^d)\), then \(r_t^A\) is flux-null with
respect to \(\hat p_t\); i.e.,
\(
    r_t^A\in\mathcal R_{\hat p_t},
    \ 
    \nabla\cdot(\hat p_t r_t^A)=0
\)
in the distributional sense.
\end{proposition}

This proposition shows that antisymmetric rotations inside each Gaussian
component generate probability currents whose total divergence vanishes. Hence
adding \(r_t^A\) changes particle trajectories but not the empirical density
tangent.

For variance-floored rectified flow, we have
\[
    m_i(t)=t x^{(i)},
    \qquad
    \Sigma_i(t)=\sigma_t^2 I_d,
    \qquad
    \sigma_t=1-(1-\sigma_{\min})t.
\]
Thus Proposition~\ref{prop:gaussian_flux_null} yields the explicit flux-null
family
\[
    r_t^A(z)
    =
    \sigma_t^2
    \sum_{i=1}^N
    w_i(t,z)A_i(t)(z-tx^{(i)}),
    \qquad
    A_i(t)^\top=-A_i(t).
\]
Consequently, every velocity field \(u_t^A=\hat v_t+r_t^A\) realizes the same
variance-floored empirical marginal path as \(\hat v_t\) at the level of the
continuity equation. For the variance-floored rectified-flow minimizer,
\[
    \hat v_t(z)
    =
    \sum_{i=1}^N
    w_i(t,z)
    \frac{x^{(i)}-(1-\sigma_{\min})z}{\sigma_t},
\]
this gives
\[
    u_t^A(z)
    =
    \sum_{i=1}^N
    w_i(t,z)
    \frac{x^{(i)}-(1-\sigma_{\min})z}{\sigma_t}
    +
    \sigma_t^2
    \sum_{i=1}^N
    w_i(t,z)A_i(t)(z-tx^{(i)}).
\]

For unregularized rectified flow, \(\sigma_t=1-t\) degenerates at \(t=1\), so
the smooth-density statements above should be read on compact intervals
\([0,T]\subset[0,1)\). With \(\sigma_{\min}>0\), the mixture remains smooth and
positive on the full interval \([0,1]\).

The antisymmetric-matrix construction is not a complete parameterization of
\(\mathcal R_{\hat p_t}\). It gives an explicit finite-dimensional flux-null
subfamily. More generally, the full space can be described through
divergence-free currents \(j_t\) satisfying \(\nabla\cdot j_t=0\), together
with sufficient integrability so that \(r_t=j_t/\hat p_t\in
L^2(\hat p_t;\mathbb R^d)\). In two dimensions, such currents may be represented
by stream functions \(j_t=J\nabla\psi_t\) under suitable 
assumptions; in higher dimensions, one may use antisymmetric tensor potentials.

\subsection{Kinetic Energy Tail-Bounds}
Quantifying the kinetic behavior of population and empirical FM samplers is a natural way to understand how often high-energy trajectories arise and what mechanisms produce them.

First, we focus on the Gaussian rectified flow (RF) example in Example \ref{ex_gaussianRF}, which is tractable enough to allow for   precise analysis. The following result shows that the probability of a generated sample under the population RF model that has high kinetic energy decays exponentially. Since this is the OT map and velocity is constant along straight paths, this bound applies simultaneously to the instantaneous kinetic energy at any time $t$ and the integrated total energy.

\begin{proposition}[Population setting, OT case] \label{prop3}
    Let $p_0 = \mathcal{N}(0, I_d)$ and $p_1 = \mathcal{N}(m_1, \Sigma_1)$, where $\Sigma_1$ is positive definite. Let $R(x) = m_1 + \Sigma_1^{1/2}x$ be the rectified flow map from Example \ref{ex_gaussianRF}. For a generated sample $Y \sim p_1$, let $E(Y)
=
\int_0^1
\left\|
v\left(t,\psi_t(R^{-1}(Y))\right)
\right\|^2dt = \| Y - R^{-1}(Y)\|^2$ be the random variable representing the kinetic energy (integrated or instantaneous).

    (a) For all $y \in \R^d$, $\frac{1}{2} E(y) = - \log p_1(y) + C(y)$, where
        \begin{equation}
        C(y) =  \frac{1}{2} y^T (I_d - 2 \Sigma_1^{-1/2}) y + m_1^T \Sigma_1^{-1/2} y - \frac{1}{2} \log \det(2\pi \Sigma_1).  
    \end{equation}
    
    (b) Assume $\Sigma_1\neq I_d$. Let $\lambda_i(\Sigma_1)$ denote the eigenvalues of $\Sigma_1$, and define
\[
\rho
:=
\max_{i=1,\dots,d}
\bigl(\sqrt{\lambda_i(\Sigma_1)}-1\bigr)^2
>0.
\]
Then, for every $u>0$,
\[
\mathbb P_{Y\sim p_1}\!\bigl(E(Y)\ge u\bigr)
\le
C\,\exp\!\left(-\frac{u}{4\rho}\right),
\]
where
\[
C
=
2^{d/2}\exp\!\left(\frac{\|m_1\|^2}{2\rho}\right).
\]
If \(\Sigma_1=I_d\), then \(E(Y)=\|m_1\|^2\) is deterministic, so the tail bound is trivial.
\end{proposition}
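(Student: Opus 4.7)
The plan is to handle part (a) by a direct algebraic expansion, and part (b) by a Chernoff bound on a generalized non-central chi-squared random variable.

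For part (a), since $R(x)=m_1+\Sigma_1^{1/2}x$ is invertible with $R^{-1}(y)=\Sigma_1^{-1/2}(y-m_1)$, the displacement reads $y-R^{-1}(y)=(I_d-\Sigma_1^{-1/2})y+\Sigma_1^{-1/2}m_1$. I would then expand $\tfrac{1}{2}\|y-R^{-1}(y)\|^2$ using the identity $(I_d-\Sigma_1^{-1/2})^2=I_d-2\Sigma_1^{-1/2}+\Sigma_1^{-1}$, and compare term by term with the Gaussian log-density $-\log p_1(y)=\tfrac{1}{2}(y-m_1)^\top\Sigma_1^{-1}(y-m_1)+\tfrac{1}{2}\log\det(2\pi\Sigma_1)$. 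The quadratic and linear pieces in $\Sigma_1^{-1}$ cancel, leaving exactly the $C(y)$ stated. This step is mechanical bookkeeping. The fact that $E(y)$ is a legitimate expression for both the integrated and the instantaneous kinetic energies follows because $R$ is the OT map from $p_0$ to $p_1$ and the associated velocity $v(t,\cdot)$ is constant along the straight-line trajectory $\psi_t(x)=x+t(R(x)-x)$, so $\|v(t,\psi_t(x))\|^2=\|R(x)-x\|^2$ is independent of $t$.

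For part (b), the key reparametrization is $Y=m_1+\Sigma_1^{1/2}Z$ with $Z\sim\mathcal{N}(0,I_d)$. Plugging into part (a)'s expression for $Y-R^{-1}(Y)$ collapses to $Y-R^{-1}(Y)=m_1+AZ$ with $A=\Sigma_1^{1/2}-I_d$, so $E(Y)=\|m_1+AZ\|^2$. Diagonalizing $\Sigma_1$ (which simultaneously diagonalizes $A^2$) gives $E(Y)=\sum_{i=1}^d(\tilde m_i+\sqrt{\rho_i}\,\tilde Z_i)^2$ with $\rho_i=(\sqrt{\lambda_i(\Sigma_1)}-1)^2\le\rho$, $\tilde Z\sim\mathcal{N}(0,I_d)$, and $\|\tilde m\|^2=\|m_1\|^2$. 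Using the standard non-central chi-squared MGF $\mathbb{E}[e^{s(\mu+\sigma W)^2}]=(1-2s\sigma^2)^{-1/2}\exp(s\mu^2/(1-2s\sigma^2))$ for $W\sim\mathcal{N}(0,1)$ and $s<1/(2\sigma^2)$, I would obtain, for $t<1/(2\rho)$,
\[
\mathbb{E}[e^{tE(Y)}]=\prod_{i=1}^d\frac{1}{\sqrt{1-2t\rho_i}}\exp\!\left(\frac{t\,\tilde m_i^2}{1-2t\rho_i}\right)\le\frac{1}{(1-2t\rho)^{d/2}}\exp\!\left(\frac{t\,\|m_1\|^2}{1-2t\rho}\right),
\]
where the bound uses $\rho_i\le\rho$ termwise in both the determinant factor and the exponential.

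Finally, the Chernoff-Markov bound $\mathbb{P}(E(Y)\ge u)\le e^{-tu}\mathbb{E}[e^{tE(Y)}]$ with the choice $t=1/(4\rho)$ makes $1-2t\rho=1/2$, yielding $2^{d/2}\exp(\|m_1\|^2/(2\rho))\exp(-u/(4\rho))$, which is the stated bound with $C=2^{d/2}\exp(\|m_1\|^2/(2\rho))$. The main obstacle is bookkeeping: ensuring that the eigenvalue bound $\rho_i\le\rho$ is applied uniformly to every factor (both the variance term and the non-centrality term) so that the exponent $1/(4\rho)$, rather than a looser $1/(8\rho)$ obtainable from a naive $\|m_1+AZ\|^2\le 2\|m_1\|^2+2\|AZ\|^2$ split, is achieved. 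The assumption $\Sigma_1\ne I_d$ guarantees $\rho>0$ so that the MGF is finite on a neighborhood of the origin and the Chernoff step is valid.
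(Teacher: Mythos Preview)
Your proposal is correct and follows essentially the same route as the paper's proof: the same reparametrization $Y=m_1+\Sigma_1^{1/2}Z$, the same diagonalization of $A=\Sigma_1^{1/2}-I_d$ leading to $E=\sum_i(\tilde m_i+d_i Z_i)^2$, and the same Chernoff bound with $t=1/(4\rho)$ so that $1-2t\rho=1/2$. The only cosmetic difference is that the paper packages the non-central chi-squared MGF as a separate lemma (their Lemma~\ref{lem_1}) rather than citing it as a standard formula.
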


Part (a) shows that in this Gaussian OT/RF case, kinetic energy differs from the target negative log-density by an explicit quadratic correction. Part (b) shows that high-energy samples are exponentially unlikely under \(p_1\). Importantly, this phenomenon arises purely from the design of the Gaussian RF model itself and the assumption that $p_1$ is Gaussian.



A similar exponential upper-tail bound holds for the empirical RF model
conditional on any fixed finite dataset, even though the empirical velocity is
nonlinear and generally not OT-optimal.

\begin{theorem}[Empirical setting, Gaussian source] \label{thm1}
Let $X_0 \sim \mathcal{N}(0, I_d)$ and suppose that we are given a fixed dataset $\mathcal{D}_N = \{x^{(i)}\}_{i \in [N]}$, $x^{(i)} \in \R^d$, with $M := \max_i \|x^{(i)}\| < \infty$.
Let $T \in [0,1)$ and define the instantaneous kinetic energy $K_t = \|\hat{v}^*(t, \psi_t(X_0))\|^2$ and the corresponding time-integrated kinetic energy $E_T = \int_0^T K_t \, dt$, where $\hat{v}^*$ is given in \eqref{eq_empRF} and $\psi_t$ solves $\dot{\psi}_t(X) = \hat{v}^*(t, \psi_t(X))$, $\psi_0(X) = X_0$, for $t \in [0, 1)$. Assume that there exists a unique solution to this ODE on $[0,T]$.

    \begin{itemize}
        \item[(a)] For each $t \in [0, T]$, there exist constants $C_t>0$, $c_t>0$, and threshold $U_t\ge 0$, depending only on $t$, $d$ and $M$, such that for every $u \ge U_t$,
        $$\mathbb{P}(K_t \ge u \mid \mathcal{D}_N) \le C_t \, e^{-c_t u}. $$
        \item[(b)] There exist constants $C_T>0$, $c_T>0$, and threshold  $U_T\ge 0$, depending only on $T$, $d$ and $M$, such that for every $u \ge U_T$,
        $$\mathbb{P}(E_T \ge u \mid \mathcal{D}_N) \le C_T \, e^{-c_T u}. $$
    \end{itemize} 
\end{theorem}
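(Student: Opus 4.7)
My plan is a pathwise-bound-then-concentrate strategy that reduces all of the randomness to the Gaussian initial condition $X_0$. The key structural observation is that the closed-form minimizer $\hat v^*$ in \eqref{eq_empRF} is a softmax-weighted convex combination of the direction vectors $(x^{(i)}-z)/(1-t)$, so the weights $w_i(t,z)$ are nonnegative and sum to one.

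The first step is to combine this convex-combination structure with the triangle inequality and the assumption $\|x^{(i)}\|\le M$ to obtain the deterministic velocity envelope $\|\hat v^*(t,z)\|\le (M+\|z\|)/(1-t)$ valid for every $z\in\R^d$ and every $t\in[0,T]$. The second step is to propagate this along the trajectory: setting $r(t):=\|\psi_t(X_0)\|$, one has $r'(t)\le \|\dot\psi_t\|\le (M+r(t))/(1-t)$. This linear differential inequality has integrating factor $(1-t)$, so $\tfrac{d}{dt}[(1-t)r(t)]\le M$, which gives the pathwise Gronwall bound $r(t)\le (\|X_0\|+Mt)/(1-t)$. Substituting back into the velocity envelope yields $\|\hat v^*(t,\psi_t(X_0))\|\le (M+\|X_0\|)/(1-t)^2$, so that $K_t\le (M+\|X_0\|)^2/(1-t)^4$ and, integrating in $t$,
$$E_T \le (M+\|X_0\|)^2 \cdot A_T, \qquad A_T:=\tfrac{1}{3}\bigl[(1-T)^{-3}-1\bigr],$$
which is finite and positive since $T<1$.

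The third step uses that $X_0\sim\N(0,I_d)$, so $\|X_0\|$ is a $1$-Lipschitz functional of a standard Gaussian vector (equivalently, $\|X_0\|^2$ is $\chi^2_d$) and satisfies $\mathbb P(\|X_0\|^2\ge s)\le e^{-s/C}$ once $s$ exceeds a dimension-dependent threshold (Laurent--Massart). The event $\{K_t\ge U_t\}$ is contained in $\{\|X_0\|\ge (1-t)^2\sqrt{U_t}-M\}$; for $U_t$ large enough that $(1-t)^2\sqrt{U_t}\ge 2M$ this has probability at most $C\exp(-c_t U_t)$ with $c_t$ proportional to $(1-t)^4$, yielding part (a). The same computation applied to $E_T$, with $A_T$ in place of $(1-t)^{-4}$, delivers part (b) with $c_T$ proportional to $1/A_T>0$. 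The dimension $d$ enters only through the ``$U_t$ sufficiently large'' requirement, so the multiplicative constant $C$ can be taken to depend only on $M$.

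The main place requiring care is the Gronwall step: $r(t)=\|\psi_t\|$ is only absolutely continuous (not $C^1$) and fails to be differentiable at zeros of $\psi_t$. This is handled by the standard smoothing $r_\varepsilon(t)=\sqrt{\|\psi_t\|^2+\varepsilon}$, which is $C^1$, satisfies the same differential inequality up to terms vanishing with $\varepsilon$, and recovers the claim on passing $\varepsilon\downarrow 0$. Everything else is deterministic estimation plus a standard Gaussian tail bound. The conceptual upshot is that although $\hat v^*$ is highly nonlinear in $z$ through the softmax weights, its convex-combination form forces a linear envelope in $\|z\|$, reducing the tail analysis of the empirical kinetic energy to that of the source $p_0$ --- exactly matching the paper's thesis that the tails are governed primarily by the source.
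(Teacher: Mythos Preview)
Your proposal is correct and follows essentially the same route as the paper's own proof: the convex-combination velocity envelope $\|\hat v^*(t,z)\|\le (M+\|z\|)/(1-t)$, the integrating-factor Gronwall bound $(1-t)r_t\le \|X_0\|+Mt$, the substitution giving $K_t\le (M+\|X_0\|)^2/(1-t)^4$ and $E_T\le A_T(M+\|X_0\|)^2$, and then a chi-squared/Gaussian tail bound on $\|X_0\|^2$. The only cosmetic differences are that the paper applies $(x+y)^2\le 2(x^2+y^2)$ before the tail step and uses an ad hoc $\chi^2_d$ MGF lemma in place of Laurent--Massart, and it does not discuss the $r_\varepsilon$-smoothing you (correctly) include to handle possible non-differentiability of $\|\psi_t\|$ at zeros.
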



Theorem \ref{thm1} implies that, just as in the population case, both instantaneous and integrated empirical kinetic energies satisfy exponential upper-tail bounds beyond a sufficiently large threshold. This phenomenon is driven by the Gaussian source distribution and holds regardless of whether the velocity field is OT-optimal.

The above bounds are conditional on the realized finite dataset; the only randomness is the draw \(X_0\sim\mathcal N(0,I_d)\). Hence even if the data points were sampled from a heavy-tailed target distribution, the exact empirical RF sampler with Gaussian source still satisfies exponential  energy upper-tail bounds on every interval \([0,T]\), \(T<1\). To obtain polynomial energy tails, one must modify the source distribution itself rather than merely perturb the observed data points.

Indeed, while Theorem \ref{thm1} establishes exponential upper-tail bounds  due to the Gaussian source, the empirical framework allows for heavy-tailed modeling if we instead consider a smoothed model from Example \ref{eg_empaffineflows} and choose the source kernel $K$ to be heavy-tailed. Specifically, if \(X_0\sim K\) satisfies a polynomial upper-tail bound \(P(\|X_0\|>s)\le C_\alpha s^{-\alpha}\), then the linear growth of the vector field propagates this polynomial control to the kinetic energy. This gives the polynomial upper-tail bound in the following theorem. 

\begin{theorem}[Empirical setting, polynomial source-tail upper bound]
\label{thm2_polydecay}
Let $D_N = \{x^{(i)}\}_{i\in[N]}$ be a fixed  dataset with 
$\max_{i\in[N]} \|x^{(i)}\| < \infty$. Let $T \in [0,1)$.
Suppose the source distribution \(p_0\) satisfies the polynomial upper-tail bound:
\[
\mathbb{P}(\|X_0\| \ge s) \le \frac{C_\alpha}{s^\alpha}
\quad\text{for all } s \ge 1,
\]
for some constants $C_\alpha > 0$ and tail index $\alpha > 0$.

For the velocity field $\hat v^\ast$ defined in Proposition~\ref{prop1}, let
\[
A_{\max} := \sup_{t\in[0,T],\,i\in[N]} |a_t(x^{(i)})|,
\qquad
B_{\max} := \sup_{t\in[0,T],\,i\in[N]} \|b_t(x^{(i)})\|,
\]
and assume that \(A_{\max}<\infty\), \(B_{\max}<\infty\), and that there exists a unique solution to the ODE driven by $\hat v^\ast$
on $[0,T]$. Then, for each $t\in[0,T]$, there exist constants $C_t>0$ and threshold $U_t\ge 0$, depending only on $t, T, A_{\max}, B_{\max}, C_\alpha, \alpha$, such that for every $u\ge U_t$,
\[
\mathbb{P}(K_t \ge u \mid D_N) \le \frac{C_t}{u^{\alpha/2}}.
\]
Moreover, there exist constants $C_T>0$ and threshold $V_T\ge 0$, depending only on $T, A_{\max}, B_{\max}$, $C_\alpha$, $\alpha$, such that for every $u\ge V_T$,
\[
\mathbb{P}(E_T \ge u \mid D_N) \le \frac{C_T}{u^{\alpha/2}}.
\]
\end{theorem}

This shows that polynomial source-tail upper bounds propagate to polynomial energy-tail upper bounds. Establishing matching lower bounds would require additional nondegeneracy assumptions on the affine coefficients.
The source distribution therefore provides a primary mechanism controlling the upper tails of kinetic energy.

\subsection{Tail Bounds for Flux-Equivalent Representatives}
\label{subsec:flux_tail}

The preceding tail bounds are not specific to the square-loss representative \(\hat v^\ast\). They depend on a linear-growth estimate for the velocity field. Thus they extend to any flux-equivalent representative whose flux-null remainder has controlled growth.

The following result is a linear-growth consequence and does not depend on the detailed form of the kernel beyond the affine velocity bound.

\begin{proposition}[Linear growth implies source-tail upper bounds]
\label{lem:linear_growth_tail}
Fix a finite time horizon \(T>0\). Let \(u_t\) be a time-dependent velocity field on
\([0,T]\) whose ODE flow is well posed. Suppose there exist constants
\(L_T,B_T\ge0\) such that
\[
    \|u_t(z)\|\le L_T\|z\|+B_T,
    \qquad t\in[0,T],\ z\in\R^d.
\]
Let \(X_t\) solve the ODE
\(
    \dot X_t=u_t(X_t),
    \  X_0\sim p_0,
\)
and define
\[
    K_t^u:=\|u_t(X_t)\|^2,
    \qquad
    E_T^u:=\int_0^T K_t^u\,dt.
\]
Then there exists a constant \(C_T>0\), depending only on
\(T,L_T,B_T\), such that, for all \(t\in[0,T]\),
\[
    K_t^u\le C_T(1+\|X_0\|^2),
    \qquad
    E_T^u\le C_T(1+\|X_0\|^2).
\]
Consequently, if \(X_0\sim\mathcal N(0,I_d)\), then there exist constants
\(c,C>0\), depending on \(T,L_T,B_T\) and \(d\), such that for all
sufficiently large \(\lambda\),
\[
    \mathbb P(K_t^u\ge \lambda)\le C e^{-c\lambda},
    \qquad
    \mathbb P(E_T^u\ge \lambda)\le C e^{-c\lambda}.
\]
If instead
\[
    \mathbb P(\|X_0\|\ge s)\le C_\alpha s^{-\alpha},
    \qquad\text{for all }s\ge1,
\]
then there exists a constant \(C>0\), depending on
\(T,L_T,B_T,C_\alpha,\alpha\), such that for all sufficiently large
\(\lambda\),
\[
    \mathbb P(K_t^u\ge \lambda)\le C\lambda^{-\alpha/2},
    \qquad
    \mathbb P(E_T^u\ge \lambda)\le C\lambda^{-\alpha/2}.
\]
\end{proposition}

We now verify that the explicit Gaussian flux-null representatives satisfy the linear-growth condition of Proposition~\ref{lem:linear_growth_tail}.

\begin{theorem}[Flux-equivalent empirical affine samplers]
\label{thm:flux_equiv_affine_tail}
Fix a finite time horizon \(T>0\), and let
\[
    \hat p_t(z)
    =
    \frac1N\sum_{i=1}^N
    \mathcal N(z;m_i(t),\Sigma_i(t)),
    \qquad t\in[0,T],
\]
where each \(\Sigma_i(t)\) is symmetric positive definite. Define
\[
    w_i(t,z)
    =
    \frac{\mathcal N(z;m_i(t),\Sigma_i(t))}
    {\sum_{j=1}^N\mathcal N(z;m_j(t),\Sigma_j(t))}.
\]
Suppose the empirical affine FM velocity is
\[
    \hat v_t(z)
    =
    \sum_{i=1}^N
    w_i(t,z)(B_i(t)z+b_i(t)),
\]
and satisfies \(\partial_t\hat p_t+\nabla\cdot(\hat p_t\hat v_t)=0\).
Let \(A_i(t)^\top=-A_i(t)\), and define
\[
    r_t^A(z)
    =
    \sum_{i=1}^N
    w_i(t,z)\Sigma_i(t)A_i(t)(z-m_i(t)),
    \qquad
    u_t^A(z)=\hat v_t(z)+r_t^A(z).
\]

Assume the ODE driven by \(u_t^A\) is well posed and that
\[
    M_T:=\sup_{i,t\in[0,T]}\|m_i(t)\|<\infty,
    \qquad
    B_T^{\rm aff}:=\sup_{i,t\in[0,T]}\|B_i(t)\|_{\mathrm{op}}<\infty,
\]
\[
    b_T^{\rm aff}:=\sup_{i,t\in[0,T]}\|b_i(t)\|<\infty,
    \qquad
    R_T^A:=\sup_{i,t\in[0,T]}\|\Sigma_i(t)A_i(t)\|_{\mathrm{op}}<\infty.
\]
Then \(\nabla\cdot(\hat p_t r_t^A)=0\) and
\(r_t^A\in L^2(\hat p_t;\mathbb R^d)\), hence \(u_t^A\) is flux-equivalent
to \(\hat v_t\) and satisfies
\(
    \partial_t\hat p_t+\nabla\cdot(\hat p_tu_t^A)=0.
\)
Moreover, \(u_t^A\) satisfies
\[
    \|u_t^A(z)\|
    \le
    L_T^A\|z\|+B_T^A,
    \qquad
    L_T^A:=B_T^{\rm aff}+R_T^A,
    \quad
    B_T^A:=b_T^{\rm aff}+R_T^A M_T.
\]

\end{theorem}

Consequently, Proposition~\ref{lem:linear_growth_tail} applies to the \(u_t^A\) defined above.
In particular, the deterministic energy bounds and the Gaussian or polynomial
source-tail upper bounds in Proposition~\ref{lem:linear_growth_tail} hold with
\(L_T=L_T^A\) and \(B_T=B_T^A\).

Finally, we specialize this theorem to the example of empirical rectified flow.  
\begin{corollary}[Variance-floored empirical rectified flow]
\label{cor:flux_equiv_vfrf}
Let \(T\in(0,1]\), \(\sigma_t=1-(1-\sigma_{\min})t\), and
\(\sigma_{\min}>0\). For \(t\in[0,T]\), let
\[
    \hat p_t(z)
    =
    \frac1N\sum_{i=1}^N
    \mathcal N(z;tx^{(i)},\sigma_t^2I).
\]
The empirical variance-floored rectified-flow velocity is
\[
    \hat v_t(z)
    =
    \sum_{i=1}^N
    w_i(t,z)
    \frac{x^{(i)}-(1-\sigma_{\min})z}{\sigma_t}.
\]
Let \(A_i(t)^\top=-A_i(t)\), and define
\[
    r_t^A(z)
    =
    \sigma_t^2
    \sum_{i=1}^N
    w_i(t,z)A_i(t)(z-tx^{(i)}),
    \qquad
    u_t^A(z)=\hat v_t(z)+r_t^A(z).
\]
If \(A_{\max}:=\sup_{i,t\in[0,T]}\|A_i(t)\|_{\mathrm{op}}<\infty\), then
\(u_t^A\) is flux-equivalent to \(\hat v_t\), generates the same empirical
marginal path, and satisfies
\[
    \|u_t^A(z)\|
    \le
    L_T^A\|z\|+B_T^A,
\]
where one may take
\[
    L_T^A
    =
    \frac{1-\sigma_{\min}}{\sigma_{\min}}
    +
    A_{\max},
    \qquad
    B_T^A
    =
    \frac{M}{\sigma_{\min}}
    +
    A_{\max}M,
    \qquad
    M:=\max_{i\in[N]}\|x^{(i)}\|.
\]
Consequently, the deterministic and source-tail bounds of
Proposition~\ref{lem:linear_growth_tail} hold for \(u_t^A\).
\end{corollary}

We end with an important caveat. Without growth control, flux-null modifications can arbitrarily alter kinetic
energy tails while preserving the same marginal path. The goal of the above
results is therefore not to show that all flux-equivalent representatives have
the same tails, but rather that the Gaussian and polynomial source-tail upper
bounds persist for representatives with controlled linear growth.  Flux equivalence alone does not control kinetic energy, as shown by the following remark, which could potentially give us a new angle to understand memorization vs. generalization in FM \cite{li2026kinetic}.

\begin{remark}
Flux equivalence preserves the marginal density evolution at the level of the
continuity equation, but it does not by itself control particle speeds or
kinetic energy.

To see this, consider the two-dimensional variance-floored empirical rectified
flow with one data point \(x^{(1)}=0\). Then
\[
    \hat p_t=\mathcal N(0,\sigma_t^2 I_2),
    \qquad
    \sigma_t=1-(1-\sigma_{\min})t,
\]
and the standard empirical velocity is
\(
    \hat v_t(z)
    =
    -\frac{1-\sigma_{\min}}{\sigma_t}z.
\)
Let
\(
    J=
    \begin{pmatrix}
        0 & -1 \\
        1 & 0
    \end{pmatrix}
\)
be the \(90^\circ\) rotation matrix. For \(a\in(0,1/4)\), define
\[
    r_t(z)
    =
    \exp\!\left(a\frac{\|z\|^2}{\sigma_t^2}\right)Jz.
\]
For each fixed \(t\), we have \(r_t\in L^2(\hat p_t;\mathbb R^2)\) and
\(
    \nabla\cdot(\hat p_t r_t)=0.
\)
Indeed, writing \(z=(x,y)\) and \(s=\|z\|^2/\sigma_t^2\), the current
\(\hat p_t r_t\) has the form
\[
    \hat p_t(z)r_t(z)
    =
    q_t(s)(-y,x)
\]
for a scalar radial function \(q_t\). Hence
\[
\begin{aligned}
    \nabla\cdot(\hat p_t r_t)
    &=
    \partial_x[-yq_t(s)]
    +
    \partial_y[xq_t(s)] =
    -yq_t'(s)\frac{2x}{\sigma_t^2}
    +
    xq_t'(s)\frac{2y}{\sigma_t^2}
    =
    0.
\end{aligned}
\]
Moreover, if \(Z_t\sim\hat p_t\) and
\(
    S:=\frac{\|Z_t\|^2}{\sigma_t^2},
\)
then \(S\sim\chi_2^2\), equivalently \(S\) is exponential with rate \(1/2\).
Since \(\|Jz\|=\|z\|\),
\[
    \mathbb E_{\hat p_t}\|r_t(Z_t)\|^2
    =
    \sigma_t^2\mathbb E\left[S e^{2aS}\right]
    =
    \frac{\sigma_t^2}{2}
    \int_0^\infty s e^{-(1/2-2a)s}\,ds
    <
    \infty
\]
because \(a<1/4\). Thus \(r_t\in L^2(\hat p_t;\mathbb R^2)\).

Therefore \(u_t=\hat v_t+r_t\) is flux-equivalent to \(\hat v_t\), and so it
preserves the same empirical marginal density evolution at the level of the
continuity equation. However, the instantaneous kinetic energy can have a much
heavier tail. Since \(\hat v_t(z)\) is radial and \(r_t(z)\) is rotational,
\(
    \hat v_t(z)\cdot r_t(z)=0.
\)
Consequently,
\[
\begin{aligned}
    \|u_t(Z_t)\|^2
    &=
    \|\hat v_t(Z_t)\|^2+\|r_t(Z_t)\|^2 =
    (1-\sigma_{\min})^2S
    +
    \sigma_t^2 S e^{2aS}.
\end{aligned}
\]
The first term has an exponential tail, whereas the second term has
polynomial-type tail decay up to logarithmic factors. Indeed, if \(s_\lambda\)
is defined by
\(
    \sigma_t^2 s_\lambda e^{2a s_\lambda}=\lambda,
\)
then
\[
    \mathbb P\!\left(\sigma_t^2 S e^{2aS}\ge \lambda\right)
    =
    \mathbb P(S\ge s_\lambda)
    =
    e^{-s_\lambda/2}.
\]
The solution is
\(
    s_\lambda
    =
    \frac{1}{2a}
    W\!\left(\frac{2a\lambda}{\sigma_t^2}\right),
\)
where \(W\) is the Lambert \(W\)-function. Since \(W(x)\sim\log x\) as
\(x\to\infty\), this tail behaves like a polynomial in \(\lambda\), up to
logarithmic corrections. Thus the same empirical marginal path can be realized
by flux-equivalent velocities with very different kinetic energy tails.
\end{remark}

\section{Numerical Validation}
\label{sec:numerics}

We complement the theoretical results with toy  experiments illustrating the source-driven
kinetic energy behavior predicted by Theorems~\ref{thm1} and~\ref{thm2_polydecay}. The goal is
not to benchmark generative quality, but to test the qualitative mechanism suggested by the
theory: conditional on a fixed dataset, the upper-tail behavior of the kinetic energy is
controlled by the source distribution.

We consider two experiments. First, we simulate the exact empirical affine-flow minimizer from
Proposition~\ref{prop1} on three two-dimensional toy datasets: two moons, eight Gaussian
clusters, and a checkerboard distribution. We compare a Gaussian source with coordinate-wise
Student-\(t_\nu\) sources for \(\nu\in\{2,5,10\}\). For each dataset and source, we generate
trajectories by solving
$\dot Z_t = \hat v(t,Z_t),$
where, for the regularized affine path
\(
    s_t=1-(1-\sigma_{\min})t, \ 
    m_t(x^{(i)})=t x^{(i)},
\)
the exact empirical minimizer is
\[
    \hat v(t,z)
    =
    \sum_{i=1}^N w_i(t,z)
    \frac{x^{(i)}-(1-\sigma_{\min})z}{s_t},
\]
with posterior weights
\[
    w_i(t,z)
    =
    \frac{
    K\!\left((z-tx^{(i)})/s_t\right)
    }{
    \sum_{j=1}^N
    K\!\left((z-tx^{(j)})/s_t\right)
    }.
\]
We record the integrated kinetic energy
$E_T
    =
    \int_0^T
    \|\hat v(t,Z_t)\|^2\,dt.$
The numerical trajectories are computed using forward Euler, so the reported energies are
discretized approximations to the continuous-time quantities in the theory. Full implementation
details are given in Appendix~\ref{app:numerics}. 

Figure~\ref{fig:empirical_ET_survival} shows empirical survival curves for \(E_T\). Across all
three datasets, the Gaussian source produces the lightest upper tails, while Student-\(t\) sources
produce heavier tails, with heavier tails as \(\nu\) decreases. The target dataset affects the scale
of the energies, but the ordering of tail heaviness is stable across datasets and is primarily
determined by the source distribution.

\begin{figure}[!h]
\centering
\includegraphics[width=0.95\textwidth]{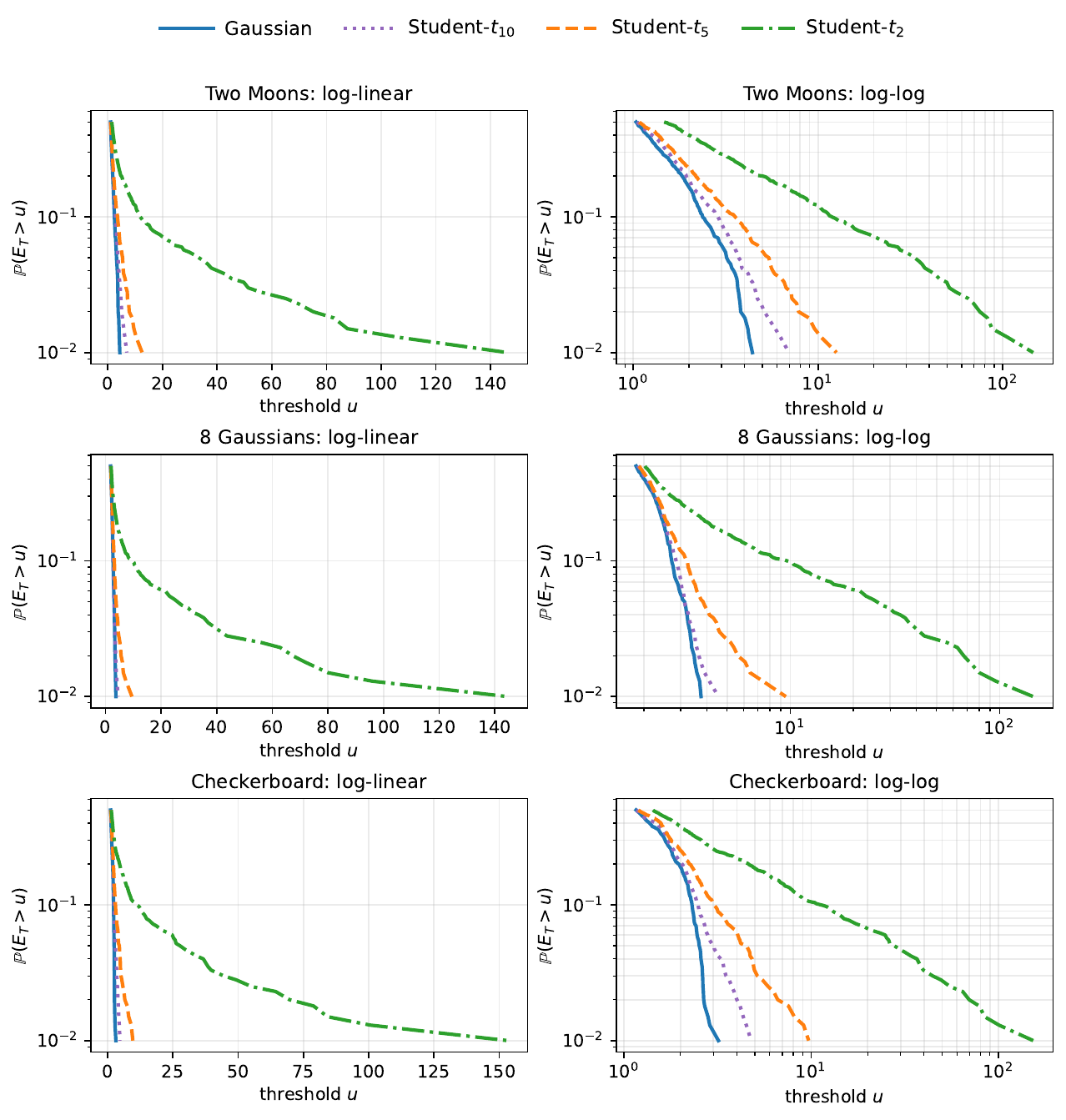}
\caption{
Empirical survival curves for the integrated kinetic energy \(E_T\) of the exact empirical
affine-flow sampler. Gaussian bases produce light upper tails, while Student-\(t\) bases produce
heavier tails as \(\nu\) decreases. The ordering is stable across datasets and is primarily
controlled by the source distribution.
}
\label{fig:empirical_ET_survival}
\end{figure}

Figure~\ref{fig:empirical_E_q99} summarizes the same effect through the empirical \(99\%\)
quantile of \(E_T\), averaged over random seeds. Heavy-tailed sources produce substantially
larger high-energy quantiles, consistent with the polynomial upper-tail mechanism in
Theorem~\ref{thm2_polydecay}.

\begin{figure}[!h]
\centering
\includegraphics[width=0.95\textwidth]{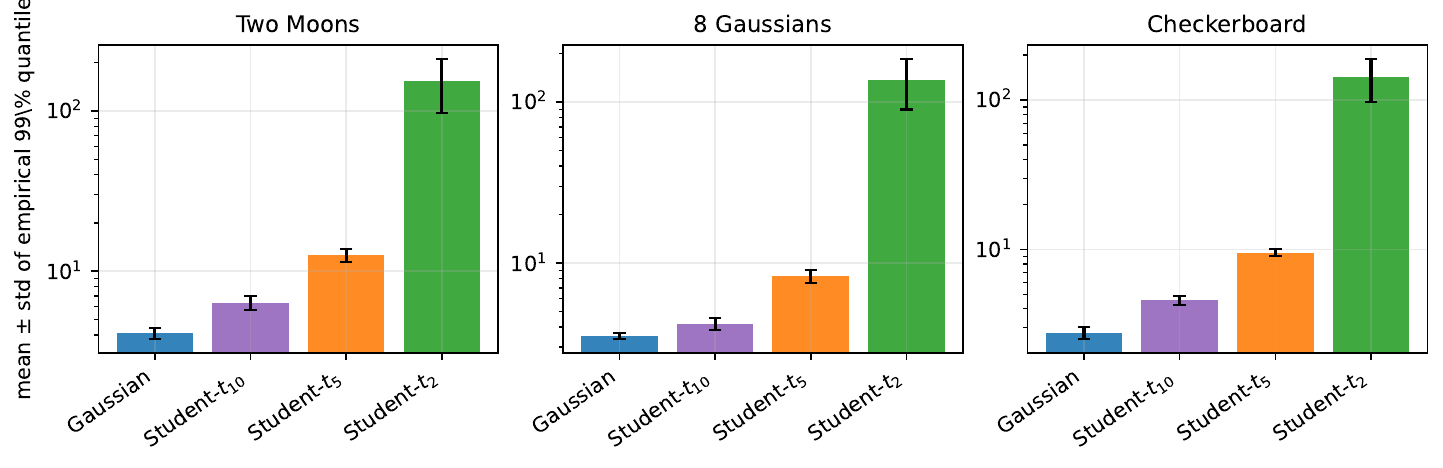}
\caption{
Mean and standard deviation across random seeds of the empirical \(99\%\) quantile of the
integrated kinetic energy \(E_T\). Heavy-tailed bases produce substantially larger high-energy
quantiles.
}
\label{fig:empirical_E_q99}
\end{figure}

Second, we isolate the sharpness of the polynomial source-to-energy exponent using a
nondegenerate affine ODE
$\dot Z_t=AZ_t+b.$
In this case,
\(
    E_T
    =
    \int_0^T
    \|AZ_t+b\|^2\,dt
    =
    (AX_0+b)^\top G_T(AX_0+b),
\)
where
$G_T
    =
    \int_0^T e^{tA^\top}e^{tA}\,dt.$
When \(A\) is nonsingular and \(G_T\) is positive definite, \(E_T\asymp \|X_0\|^2\) in the tail.
Thus a source tail of order \(s^{-\alpha}\) naturally induces an energy tail of order
\(u^{-\alpha/2}\). 
Figure~\ref{fig:sharpness_survival} shows that the heaviest-tailed case closely follows the benchmark exponent \(-\nu/2\), while lighter-tailed cases exhibit pre-asymptotic behavior over the plotted range. 

\begin{figure}[!h]
\centering
\includegraphics[width=0.95\textwidth]{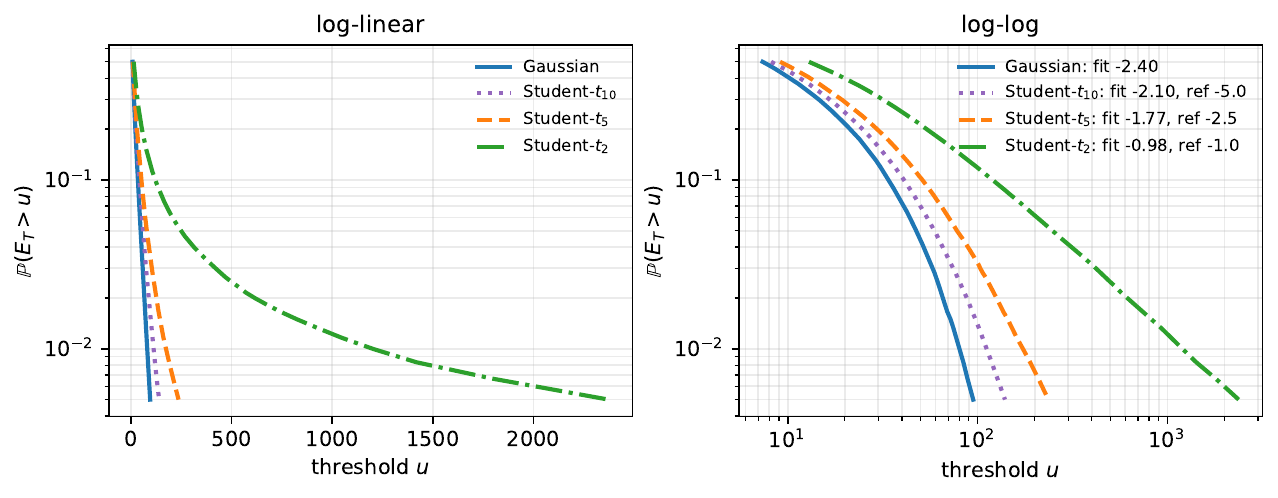}
\caption{
Survival curves for the nondegenerate affine sharpness experiment. For Student-\(t_\nu\) sources, the heaviest-tailed case \(t_2\) closely matches the benchmark \(-\nu/2\), while lighter-tailed cases require more extreme-tail resolution. The experiment supports the source-to-energy exponent mechanism in this controlled affine setting.
}
\label{fig:sharpness_survival}
\end{figure}

Overall, the experiments support the theoretical picture: EFM samplers inherit
energetic biases from the source distribution used to initialize them. Gaussian sources
produce light energy tails, while polynomially tailed sources produce heavier high-energy
profiles.

\section{Conclusion} \label{sec:discussion}

We proposed a plug-in perspective on flow matching that distinguishes objective-level empirical approximation from replacing the target law itself by raw empirical or smoothed finite-sample surrogates. This hierarchy shows that finite-sample FM is not merely population FM trained with Monte Carlo noise: it can change the statistical target, the transport geometry, and the energetic behavior of the sampler.

For affine conditional flows, we derived the exact empirical minimizer as a posterior-weighted mixture of conditional velocities. In the regularized affine setting, the terminal law is exactly a kernel density estimator, directly connecting smoothed empirical target FM with classical nonparametric density estimation and identifying the terminal scale as a bandwidth parameter.

We also identified a geometric bias of raw empirical target FM. Even when each conditional velocity is a gradient field, the empirical minimizer is generally not, because the posterior weights vary spatially. This gives a precise obstruction to Benamou--Brenier optimality and shows how empirical FM can introduce rotational components absent from optimal transport flows.

A further consequence is that the empirical marginal path does not determine a unique particle dynamics. We made this explicit through a probability-flux equivalence relation: two velocities are equivalent if their probability fluxes have the same divergence against the empirical marginal. The square-loss empirical FM minimizer is one representative of this class. Adding a flux-null remainder field \(r_t\) satisfying \(\nabla\cdot(\hat p_t r_t)=0\) preserves the empirical density path while changing particle trajectories. For variance-floored rectified flow and Gaussian affine conditional paths, we gave explicit flux-null subfamilies parameterized by antisymmetric matrices, together with a variational least-energy principle for selecting representatives.

Finally, we studied kinetic energy tails. Conditional on a fixed finite dataset, Gaussian sources yield exponential upper-tail bounds for instantaneous and integrated energies, while polynomially tailed sources yield corresponding polynomial bounds. The same qualitative source-controlled upper-tail mechanism extends to flux-equivalent representatives under bounded linear-growth assumptions on the flux-null remainder. Toy numerical experiments support this picture.

Overall, EFM exhibits several coupled finite-sample effects: a statistical plug-in bias from the surrogate target law, a geometric bias from posterior-weighted velocity mixtures, a non-uniqueness of particle dynamics modulo flux-null remainders, and an energetic bias controlled by the source distribution. Understanding how these effects persist under model (neural network) approximation, discretization, stochastic sampling, more general conditional paths, and for other data generating settings \cite{lim2024elucidating, lim2026flow} is an important direction for future work, as is designing source distributions, numerical schemes, timestep schedules \cite{gupta2026sharpenflowsharpnessawaresampling}, or flux-null remainder corrections that control energy profiles and trajectory-level behavior. Finally, it would also be interesting to study how the statistical errors of the plug-in estimators behave for different regimes and settings, which we leave to a future work.

{\bf Limitations.} Our analysis concerns exact empirical minimizers over unrestricted function classes. In practice, FM models are trained with neural networks and numerical ODE solvers are used during sampling. These approximations may introduce additional biases beyond the plug-in effects studied here. Moreover, our kinetic energy bounds are upper-tail results; matching lower bounds require additional nondegeneracy assumptions on the learned or empirical velocity field. The flux-equivalent construction preserves marginal paths, and therefore cannot remove density-level memorization when the chosen empirical path itself ends at empirical atoms or a narrow kernel density estimator. 

{\bf Acknowledgment.} SHL would like to acknowledge support from the Wallenberg Initiative on Networks and Quantum Information   and the Swedish Research Council (VR/2021-03648).












\bibliographystyle{plain}
\bibliography{reference}

\newpage 
\appendix
\section*{Appendix}
This appendix is organized as follows. In App. \ref{app:relatedwork} we discuss related work. In App. \ref{app:proof} we provide detailed proof of the theoretical results presented in the main note. In App. \ref{app:numerics} we provide details of the numerical validation.

\section{Related Work} \label{app:relatedwork}


\noindent {\bf Flow Matching and related models.}
Flow Matching (FM) \cite{lipman2022flow, lipman2024flow} and Conditional Flow Matching (CFM) \cite{tong2023improving} have been developed as scalable alternatives \cite{esser2024scaling} to diffusion-based generative models \cite{song2020score, lai2025principles}. Recent work has analyzed their statistical, geometric, and algorithmic foundations, including distributional properties of FM~\cite{kunkel2025distribution}, particle and bridge-based interpretations~\cite{bamberger2025carr}, and geometric structure and gauge freedom in learned flow-based and diffusion models~\cite{wanelucidating, horvat2024gauge}. Extensions include guided generation~\cite{feng2025guidance}, statistical efficiency analyses~\cite{mena2025statistical}, rigorous comparisons between FM and optimal transport~\cite{hertrich2025relation, wald2025flow}, and related studies on spatio-temporal physical systems \cite{lim2024elucidating, erichson2025flex}. The kinetic behavior of flow-based samplers has also been examined in~\cite{shaul2023kinetic, li2025enfopath, li2026kinetic}.

\noindent {\bf Empirical FM, memorization, and density-estimation viewpoints.}
A growing body of work studies memorization, generalization, and interpolation phenomena in modern generative models. For diffusion models, prior work has analyzed identifiability, overfitting, and deterministic sampling behavior~\cite{pidstrigach2022score, yoon2023deterministic}. Further studies provide theoretical and empirical characterizations of interpolation, dataset coverage, and memorization tendencies~\cite{lyu2025resolving, scarvelis2023closed, baptista2025memorization, bertrand2025closed, chen2025interpolation}. For flow matching more specifically, recent work connects empirical FM to kernel density estimation and minimax nonparametric rates, making explicit that finite-sample FM can be understood as an implicit distribution estimator rather than only a transport learner \cite{kunkel2025minimax}. Our treatment complements this line by isolating the distinction between raw empirical target plug-in and smoothed plug-in targets, and by showing that the raw empirical minimizer generically develops non-gradient structure.

\noindent {\bf Conservativity, gauge freedom, and divergence alignment.}
Recent work has emphasized that the properties of vector field beyond pointwise
velocity matching can affect generative dynamics. Horvat and
Pfister~\cite{horvat2024gauge} study gauge freedom in diffusion models,
showing that vector fields need not be conservative to yield exact sampling or
density estimation when the non-conservative remainder satisfies an appropriate
gauge condition. In a complementary direction, \cite{huang2026improving} shows that conditional flow matching alone
does not necessarily control the learned probability path and propose aligning
both the flow and its divergence. Our work is related in spirit, but focuses on
a different finite-sample phenomenon: after replacing the target law by an
empirical or smoothed plug-in surrogate, the exact empirical FM minimizer and
its flux-equivalent representatives are analyzed directly. In particular,
flux-null vector fields preserve the prescribed empirical marginal path
while changing the particle-level dynamics.

\noindent {\bf Understanding and improving the sampling process.}
A complementary literature studies the dynamics and stability of generative sampling. This includes analyses of Lipschitz regularity and stability~\cite{chen2025lipschitz}, and methods aimed at accelerating or manipulating the generation process~\cite{gagneux2025generation, stancevic2025entropic, gupta2026sharpenflowsharpnessawaresampling}. For diffusion and score-based models, \cite{hurault2025score} examines how score estimation affects sampling quality. Our work adds to this view by characterizing the structural loss of gradient-field behavior and the concentration of kinetic energy induced by empirical FM.

\section{Proof of Theoretical Results} \label{app:proof}

\subsection{Proof of Proposition~\ref{prop1}}

\begin{proof}
Let $t \in [0,1]$ be given.
Let \(I\) be uniformly distributed on \(\{1,\ldots,N\}\), let
\(X=x^{(I)}\), and, conditional on \(X=x^{(i)}\), let
\(Z_t\sim p_t(\cdot\mid x^{(i)})\). For each \(i\), write
\(
    q_i(t,z):=p_t(z\mid x^{(i)}),
    \ 
    V_i(t,z):=v(t,z\mid x^{(i)}).
\)
For affine conditional flows,
\(
    V_i(t,z)=a_t(x^{(i)})z+b_t(x^{(i)}).
\)
The empirical marginal density at time \(t\) is
\(
    \hat p_t(z)=\frac1N\sum_{i=1}^N q_i(t,z).
\)

The empirical CFM objective can be written as
\[
\begin{aligned}
    \widehat{\mathcal L}_{\mathrm{CFM}}[v']
    &=
    \mathbb E_t\left[
    \frac1N\sum_{i=1}^N
    \int_{\mathbb R^d}
    \|v'(t,z)-V_i(t,z)\|^2 q_i(t,z)\,dz
    \right]  \\
    &=
    \mathbb E_t\left[
    \int_{\mathbb R^d}
    \frac1N\sum_{i=1}^N
    \|v'(t,z)-V_i(t,z)\|^2 q_i(t,z)\,dz
    \right].
\end{aligned}
\]
Define
\(
    w_i(t,z)
    :=
    \frac{q_i(t,z)}
    {\sum_{j=1}^N q_j(t,z)}.
\)
Then
\(
    \frac1N q_i(t,z)
    =
    \hat p_t(z)w_i(t,z),
\)
and therefore
\[
\begin{aligned}
    \widehat{\mathcal L}_{\mathrm{CFM}}[v']
    &=
    \mathbb E_t\left[
    \int_{\mathbb R^d}
    \sum_{i=1}^N
    w_i(t,z)\|v'(t,z)-V_i(t,z)\|^2
    \hat p_t(z)\,dz
    \right].
\end{aligned}
\]

For fixed \((t,z)\), consider the function of \(a\in\mathbb R^d\)
\[
    F_{t,z}(a)
    :=
    \sum_{i=1}^N w_i(t,z)\|a-V_i(t,z)\|^2.
\]
Since the weights are nonnegative and sum to one, completing the square gives
\[
    F_{t,z}(a)
    =
    \left\|a-\sum_{i=1}^N w_i(t,z)V_i(t,z)\right\|^2
    +
    \sum_{i=1}^N w_i(t,z)\|V_i(t,z)\|^2
    -
    \left\|\sum_{i=1}^N w_i(t,z)V_i(t,z)\right\|^2.
\]
Thus, for each \((t,z)\) with \(\hat p_t(z)>0\), the unique pointwise minimizer is
\[
    \hat v^\ast(t,z)
    =
    \sum_{i=1}^N w_i(t,z)V_i(t,z).
\]
Substituting the affine form of \(V_i\), we obtain
\[
    \hat v^\ast(t,z)
    =
    \sum_{i=1}^N
    w_i(t,z)\bigl(a_t(x^{(i)})z+b_t(x^{(i)})\bigr).
\]

Equivalently, this is the conditional expectation
\(
    \hat v^\ast(t,z)
    =
    \mathbb E[v(t,z\mid X)\mid Z_t=z],
\)
since Bayes' rule gives
\[
    \mathbb P(X=x^{(i)}\mid Z_t=z)
    =
    \frac{N^{-1}p_t(z\mid x^{(i)})}
    {N^{-1}\sum_{j=1}^N p_t(z\mid x^{(j)})}
    =
    w_i(t,z).
\]

It remains to justify uniqueness in the stated function space. The previous completion-of-squares identity yields
\[
\begin{aligned}
    \widehat{\mathcal L}_{\mathrm{CFM}}[v']
    &=
    \mathbb E_t\left[
    \int_{\mathbb R^d}
    \|v'(t,z)-\hat v^\ast(t,z)\|^2\hat p_t(z)\,dz
    \right]
    +
    C,
\end{aligned}
\]
where \(C\) is independent of \(v'\). Therefore \(v'\) minimizes the empirical CFM objective over
\(L^2(dt\,\hat p_t(dz);\mathbb R^d)\) if and only if
\(
    v'(t,z)=\hat v^\ast(t,z)
\)
for \(dt\otimes\hat p_t\)-almost every \((t,z)\). Hence the minimizer is unique as an element of
\(L^2(dt\,\hat p_t(dz);\mathbb R^d)\).

Finally, the empirical FM objective centered at the marginal velocity \(\hat v^\ast\) is
\[
    \widehat{\mathcal L}_{\mathrm{FM}}[v']
    =
    \mathbb E_t\left[
    \int_{\mathbb R^d}
    \|v'(t,z)-\hat v^\ast(t,z)\|^2\hat p_t(z)\,dz
    \right],
\]
so it has the same unique \(dt\otimes\hat p_t\)-a.e. minimizer. This completes the proof.
\end{proof}

\subsection{Proof of Proposition \ref{prop2}}
\begin{proof}[Proof of Proposition \ref{prop2}]
Fix \(t\in[0,T]\), with \(T<1\) in the unregularized rectified-flow case.  By the Poincar\'e lemma \cite{charles2022iterated}, a continuously differentiable vector field $F:\R^d\to\R^d$ on the simply connected domain $\R^d$ is a gradient field if and only if its Jacobian matrix $J_F$ is symmetric everywhere. Hence it suffices to characterize when the Jacobian of $\hat v^*(t,\cdot)$ is symmetric.

Write
\[
\hat v^*(t,z)=\sum_{i=1}^N w_i(t,z)v_i(t,z),
\qquad
v_i(t,z)=a_t(x^{(i)})z+b_t(x^{(i)}).
\]
Using the product rule $\nabla_z(cu)=cJ_u+u(\nabla_z c)^\top$ for a scalar-valued function $c$ and a vector-valued function $u$, we obtain
\[
J_{\hat v^*}(t,z)=\sum_{i=1}^N\Big(w_i(t,z)J_{v_i}(t,z)+v_i(t,z)(\nabla_z w_i(t,z))^\top\Big).
\]
Since $a_t(x^{(i)})$ is a scalar, the Jacobian of the affine field $v_i$ is
\(
J_{v_i}(t,z)=a_t(x^{(i)})I_d,
\)
which is symmetric. Therefore the only possible skew-symmetric contribution to $J_{\hat v^*}$ comes from the spatial variation of the weights $w_i(t,z)$. Taking the transpose and subtracting gives
\[
J_{\hat v^*}(t,z)-J_{\hat v^*}(t,z)^\top
=\sum_{i=1}^N\Big(v_i(t,z)(\nabla_z w_i(t,z))^\top-(\nabla_z w_i(t,z))v_i(t,z)^\top\Big).
\]
Consequently, $J_{\hat v^*}(t,z)$ is symmetric for all $z$ if and only if
$$\sum_{i=1}^N \left( v_i(t, z) \nabla_z w_i(t, z)^\top - \nabla_z w_i(t, z) v_i(t, z)^\top \right) = 0,$$
which is exactly the criterion stated in Proposition \ref{prop2}.
\end{proof}

\subsection{Proof of Proposition~\ref{prop:flux_equiv}}

\begin{proof}
By assumption, \(\hat v_t\) satisfies
\[
    \partial_t\hat p_t+\nabla\cdot(\hat p_t\hat v_t)=0
\]
in the weak, or distributional, sense. Since \(r_t\in\mathcal R_{\hat p_t}\)
for a.e. \(t\), we have
\[
    \nabla\cdot(\hat p_t r_t)=0
\]
in \(\mathcal D'(\mathbb R^d)\) for a.e. \(t\). Therefore, for
\(u_t=\hat v_t+r_t\),
\[
\begin{aligned}
    \partial_t\hat p_t+\nabla\cdot(\hat p_tu_t)
    &=
    \partial_t\hat p_t+\nabla\cdot\bigl(\hat p_t(\hat v_t+r_t)\bigr) =
    \partial_t\hat p_t+\nabla\cdot(\hat p_t\hat v_t)
    +
    \nabla\cdot(\hat p_t r_t) 
    =0
\end{aligned}
\]
in the distributional sense, for a.e. \(t\in[0,T]\). Hence \(u_t\) realizes
the same marginal density evolution as \(\hat v_t\), and so the same empirical marginal
path at the level of the continuity equation.

If, in addition, the ODE flows associated with \(\hat v_t\) and \(u_t\) are
well posed and the continuity equation is unique in the relevant solution
class, then any solution starting from \(\hat p_0\) and satisfying the above
continuity equation must coincide with \((\hat p_t)_{t\in[0,T]}\). Therefore
both flows push \(\hat p_0\) forward to \(\hat p_t\).
\end{proof}

\subsection{Proof of Proposition \ref{prop:gaussian_flux_null}}

\begin{proof}[Proof of Proposition~\ref{prop:gaussian_flux_null}]
Fix \(t\) and suppress the \(t\)-dependence. Write
\(p_i(z)=\mathcal N(z;m_i,\Sigma_i)\). Since
\[
    \hat p(z)w_i(z)=\frac1N p_i(z),
\]
we have
\[
    \hat p(z)r^A(z)
    =
    \frac1N
    \sum_{i=1}^N
    p_i(z)\Sigma_iA_i(z-m_i).
\]
It is enough to show that each component current has zero divergence. Fix
\(i\), and write \(m=m_i\), \(\Sigma=\Sigma_i\), \(A=A_i\), and \(y=z-m\).
Since
\(
    \nabla_z\log p_i(z)=-\Sigma^{-1}y,
\)
we obtain
\[
\begin{aligned}
    \nabla_z\cdot\{p_i(z)\Sigma A y\}
    &=
    p_i(z)\operatorname{tr}(\Sigma A)
    +
    p_i(z)(-\Sigma^{-1}y)^\top\Sigma A y \\
    &=
    p_i(z)\operatorname{tr}(\Sigma A)
    -
    p_i(z)y^\top A y.
\end{aligned}
\]
Because \(\Sigma\) is symmetric and \(A\) is antisymmetric,
\(\operatorname{tr}(\Sigma A)=0\). Also \(y^\top Ay=0\) for every
\(y\in\mathbb R^d\). Hence
\[
    \nabla_z\cdot\{p_i(z)\Sigma_iA_i(z-m_i)\}=0.
\]
Summing over \(i\) gives
\(
    \nabla\cdot(\hat p_t r_t^A)=0.
\)
Since the weights
are nonnegative and sum to one,
\[
    \|r_t^A(z)\|
    \le
    \sum_{i=1}^N
    w_i(t,z)\|\Sigma_i(t)A_i(t)\|_{\mathrm{op}}\|z-m_i(t)\|
    \le
    R_T^A(\|z\|+M_T).
\]
Since \(\hat p_t\) is a finite Gaussian mixture, it has finite second moment.
Therefore \(r_t^A\in L^2(\hat p_t;\mathbb R^d)\). Hence
\(r_t^A\in\mathcal R_{\hat p_t}\).

Together with the assumed \(L^2(\hat p;\mathbb R^d)\)-membership, this proves
\(r^A\in\mathcal R_{\hat p}\).
\end{proof}

\subsection{Proof of Proposition \ref{prop3}}

\begin{proof}[Proof of Proposition \ref{prop3} (a)]
Since $p_1 = \mathcal{N}(m_1, \Sigma_1)$, we have, for all $y \in \R^d$,
\begin{align}
- \log p_1(y) &= \frac{1}{2} (y - m_1)^T \Sigma_1^{-1} (y-m_1) + \frac{1}{2} \log \det (2 \pi \Sigma_1) \\
&= \frac{1}{2} y^T \Sigma_1^{-1} y - y^T \Sigma_1^{-1} m_1 + \frac{1}{2} m_1^T \Sigma_1^{-1} m_1 + \frac{1}{2} \log \det (2 \pi \Sigma_1).
\end{align}

Meanwhile, $E(y) = \|y-R^{-1}(y)\|^2 =  \|y- \Sigma_1^{-1/2}(y-m_1) \|^2 = \| (I_d - \Sigma_1^{-1/2})y + \Sigma_1^{-1/2} m_1 \|^2.$ Expanding the term and then regrouping the resulting terms, we obtain, for all $y \in \R^d$, 
$$\frac{1}{2} E(y) = \frac{1}{2} y^T(I-2 \Sigma_1^{-1/2} + \Sigma_1^{-1}) y + m_1^T \Sigma_1^{-1/2} y - m_1^T \Sigma_1^{-1} y + \frac{1}{2} m_1^T \Sigma_1^{-1} m_1.$$

The desired result then follows from the above formula for $-\log p_1(y)$ and $\frac{1}{2} E(y)$. 
\end{proof}

Before proving Proposition \ref{prop3} (b), we need the following auxiliary result. 
\begin{lemma} \label{lem_1}
    Let $W \sim \mathcal{N}(0, 1)$ be a scalar standard Gaussian random variable. Let $a, b \in \mathbb{R}$ be constants. If $b < \frac{1}{2}$, then:
    $$\mathbb{E}\left[e^{a W + bW^2}\right] = \frac{1}{\sqrt{1 - 2b}} \exp\left( \frac{a^2}{2(1 - 2b)} \right).$$
\end{lemma}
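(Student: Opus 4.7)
The plan is to evaluate the expectation directly as a Gaussian integral by completing the square in the exponent. First I would write
\[
\mathbb{E}\bigl[e^{aW + bW^2}\bigr] = \frac{1}{\sqrt{2\pi}} \int_{-\infty}^{\infty} \exp\!\left( aw + bw^2 - \tfrac{1}{2}w^2 \right) dw = \frac{1}{\sqrt{2\pi}} \int_{-\infty}^{\infty} \exp\!\left( -\tfrac{1-2b}{2} w^2 + aw \right) dw.
\]
The assumption $b < 1/2$ is exactly what is needed to ensure $1 - 2b > 0$, so the quadratic in the exponent has the correct sign for the integral to converge.

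Next I would complete the square in $w$. Setting $\kappa := 1 - 2b > 0$, I rewrite
\[
-\tfrac{\kappa}{2} w^2 + aw = -\tfrac{\kappa}{2}\left( w - \tfrac{a}{\kappa} \right)^2 + \tfrac{a^2}{2\kappa}.
\]
The first term integrates to $\sqrt{2\pi/\kappa}$ by the standard Gaussian integral (after the translation $w \mapsto w - a/\kappa$, which is a harmless change of variables on $\R$), while the second term is a constant that pulls out of the integral. Combining these with the prefactor $1/\sqrt{2\pi}$ yields
\[
\mathbb{E}\bigl[e^{aW + bW^2}\bigr] = \frac{1}{\sqrt{2\pi}} \cdot \sqrt{\frac{2\pi}{1-2b}} \cdot \exp\!\left( \frac{a^2}{2(1-2b)} \right) = \frac{1}{\sqrt{1-2b}} \exp\!\left( \frac{a^2}{2(1-2b)} \right),
\]
which is the claimed identity.

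There is no real obstacle here — the result is simply the moment generating function of the non-central chi-squared-type quadratic form $aW + bW^2$, and the computation is a routine completion of the square. The only point worth flagging is the role of the hypothesis $b < 1/2$: without it, the integrand grows (or fails to decay) at infinity and the expectation is infinite, so this condition enters as a sharp integrability threshold rather than a technical convenience.
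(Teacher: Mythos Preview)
Your proof is correct and follows essentially the same route as the paper: both reduce the expectation to the Gaussian integral $\int_{\mathbb{R}} e^{-Aw^2+Bw}\,dw$ with $A=\tfrac{1}{2}-b>0$ and $B=a$, the only difference being that you complete the square explicitly while the paper simply quotes the resulting formula $\sqrt{\pi/A}\,e^{B^2/(4A)}$.
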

\begin{proof}
    We shall apply the integral formula: for $A > 0$, 
    $$\int_{-\infty}^{\infty} e^{-Ax^2 + Bx} \, dx = \sqrt{\frac{\pi}{A}} \exp\left(\frac{B^2}{4A}\right).$$
    
    The expectation is:
    $$\mathbb{E}[e^{aW + bW^2}] = \frac{1}{\sqrt{2\pi}} \int_{-\infty}^{\infty} \exp\left( -\left(\frac{1}{2} - b\right)w^2 + aw \right) \, dw.$$
    Identify $A = \frac{1}{2} - b$ (which is positive since $b < 1/2$) and $B = a$, and apply the formula  gives: 
    $$\begin{aligned}
\mathbb{E}\left[e^{a W + bW^2}\right] &= \frac{1}{\sqrt{2\pi}} \cdot \sqrt{\frac{\pi}{\frac{1}{2} - b}} \cdot \exp\left( \frac{a^2}{4(\frac{1}{2} - b)} \right) = \frac{1}{\sqrt{1 - 2b}} \exp\left( \frac{a^2}{2(1 - 2b)} \right).
\end{aligned}$$
\end{proof}

With this lemma in place, we can now prove part (b) in Proposition \ref{prop3}.  

\begin{proof}[Proof of Proposition \ref{prop3} (b)]
    The RF map is given as $R(x) = m_1 + \Sigma_1^{1/2}x$, and the inverse map is given by $R^{-1}(y) = \Sigma_1^{-1/2}(y - m_1)$. We analyze  the random variable $E(Y)$ where $Y \sim p_1$. Since $p_1$ is the pushforward of $p_0 = \mathcal{N}(0, I_d)$ through $R$, we can parameterize $Y$ using  $X \sim \mathcal{N}(0, I_d)$ via $Y = R(X)$.
    
    Substituting this into the energy definition, we have 
    $E(Y) = \| Y - R^{-1}(Y) \|^2.$
    Since $R^{-1}(R(X)) = X$ by definition of the inverse, this simplifies to $E = \| R(X) - X \|^2.$
    Using the definition of $R(X)$:$$E = \| (m_1 + \Sigma_1^{1/2}X) - X \|^2 = \| m_1 + (\Sigma_1^{1/2} - I_d)X \|^2.$$

    Let $A = \Sigma_1^{1/2} - I_d$. Note that $A$ is symmetric and we can consider the eigen-decomposition $A = U D U^T$, where $U$ is orthogonal and $D$ is diagonal with elements $d_i$. The eigenvalues of $\Sigma_1^{1/2}$ are $\sqrt{\lambda_i(\Sigma_1)}$. Thus, the eigenvalues of $A$ are 
    $d_i = \sqrt{\lambda_i(\Sigma_1)} - 1.$
    The kinetic energy can then be written as 
    $E = \| m_1 + U D U^T X \|^2.$ 
    
    Since the Euclidean norm is rotation-invariant, $\|v\|^2 = \|U^T v\|^2$ for any orthogonal matrix $U$, we obtain:
    $$E = \| U^T m_1 + D (U^T X) \|^2$$Let $\tilde{m} = U^T m_1$ (note $\|\tilde{m}\|^2 = \|m_1\|^2$) and $Z = U^T X$. Since $X \sim \mathcal{N}(0, I_d)$ and $U$ is orthogonal, $Z \sim \mathcal{N}(0, I_d)$. The energy decomposes into a sum of independent terms:
    $$E = \sum_{i=1}^d (\tilde{m}_i + d_i Z_i)^2.$$

    Let $u > 0$ be given. Applying the Chernoff bound gives 
    $\mathbb{P}(E \geq u) \leq e^{-tu} \mathbb{E}[e^{tE}]$ for any $t > 0$. Using the independence of $Z_i$, we have: 
    $$\mathbb{E}[e^{tE}] = \prod_{i=1}^d \mathbb{E}\left[ \exp\left( t(\tilde{m}_i + d_i Z_i)^2 \right) \right] =: \prod_{i=1}^d M_i.$$ 
    
    Expanding the term in the exponents, we see that:
    $$t(\tilde{m}_i^2 + 2\tilde{m}_i d_i Z_i + d_i^2 Z_i^2) = (t\tilde{m}_i^2) + (2t\tilde{m}_i d_i)Z_i + (t d_i^2)Z_i^2.$$
    
    Now, we apply Lemma \ref{lem_1} for $\mathbb{E}[e^{aW + bW^2}]$ with $W=Z_i$, $a = 2t\tilde{m}_i d_i$ and $b = t d_i^2$, for $b < 1/2$. 
    Let $\rho = \max_i (\sqrt{\lambda_i(\Sigma_1)} - 1)^2 = \max_i d_i^2$ (which is positive since we assume $\Sigma_1 \neq I_d$) and  choose $t = \frac{1}{4\rho}$. Then $b = \frac{d_i^2}{4\rho} \leq \frac{1}{4} < \frac{1}{2}$, and so the condition needed to apply the lemma is satisfied. 
    
    Applying the lemma to the $M_i$, we have: 
    $$M_i =  \frac{1}{\sqrt{1 - 2t d_i^2}} \cdot \exp\left( t\tilde{m}_i^2 + \frac{(2t\tilde{m}_i d_i)^2}{2(1 - 2t d_i^2)} \right).$$ 

    Now, we bound the terms:
    $2t d_i^2 = \frac{d_i^2}{2\rho} \leq \frac{1}{2}$. Thus $\sqrt{1 - 2t d_i^2} \geq \sqrt{1/2}$, and $\frac{1}{\sqrt{1 - 2t d_i^2}} \leq \sqrt{2}$. 
    Thus, for the term in the exponent of $M_i$:
    $$t\tilde{m}_i^2 + \frac{4t^2 \tilde{m}_i^2 d_i^2}{2(1 - 2t d_i^2)} = t\tilde{m}_i^2 \left( 1 + \frac{2t d_i^2}{1 - 2t d_i^2} \right) = \frac{t\tilde{m}_i^2}{1 - 2t d_i^2}.$$ 
    Since $1 - 2t d_i^2 \geq 1/2$, 
    $$\frac{t\tilde{m}_i^2}{1 - 2t d_i^2} \leq 2t\tilde{m}_i^2 = \frac{\tilde{m}_i^2}{2\rho}.$$
    
    Combining these, we have:
    $$M_i \leq \sqrt{2} \exp\left( \frac{\tilde{m}_i^2}{2\rho} \right).$$ 
    Therefore, 
    $$\mathbb{E}[e^{tE}] \leq \prod_{i=1}^d \left( \sqrt{2} e^{\frac{\tilde{m}_i^2}{2\rho}} \right) = 2^{d/2} \exp\left( \frac{\sum \tilde{m}_i^2}{2\rho} \right) = 2^{d/2} \exp\left( \frac{\|m_1\|^2}{2\rho} \right) =: C.$$
    
    Finally, substituting this into the earlier Chernoff bound: $$\mathbb{P}(E \geq u) \leq e^{-tu} C = C \exp\left( -\frac{u}{4\rho} \right).$$
    
\end{proof}

\subsection{Proof of Theorem \ref{thm1}}
Before that, we need the following lemma.

\begin{lemma} \label{app_lem2}
    Let $X_0 \sim \mathcal{N}(0, I_d)$. Define $U = \frac{\|X_0\|^2}{d}.$ 
    For all $s \geq 2$, we have:
    $$\mathbb{P}(U \geq s) \leq \exp\left(-\frac{sd}{16} \right).$$
\end{lemma}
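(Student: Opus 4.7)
\textbf{Proof plan for Lemma \ref{app_lem2}.}
The observation driving everything is that $\|X_0\|^2$ is a chi-squared random variable with $d$ degrees of freedom, so $U = \|X_0\|^2/d$ is sub-exponential with mean $1$. The natural tool is a Chernoff (Cram\'er--Chernoff) bound, leveraging the explicit moment generating function of a $\chi^2_d$, namely $\E[e^{t\|X_0\|^2}] = (1-2t)^{-d/2}$ for $t < 1/2$.

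The plan is as follows. First, apply Markov's inequality to $e^{t\|X_0\|^2}$ at threshold $sd$ for some $t \in (0, 1/2)$ to be chosen, obtaining
\[
\mathbb{P}(U \ge s) \;=\; \mathbb{P}(\|X_0\|^2 \ge sd) \;\le\; e^{-t s d}\,(1-2t)^{-d/2}.
\]
Second, make the simple choice $t = 1/4$ (well inside the admissible range), which yields
\[
\mathbb{P}(U \ge s) \;\le\; 2^{d/2}\, e^{-sd/4}.
\]
Third, reduce this to the target form by rewriting the right-hand side as $\exp\!\bigl(d\bigl(\tfrac{\ln 2}{2} - \tfrac{s}{4}\bigr)\bigr)$ and checking the elementary scalar inequality
\[
\tfrac{\ln 2}{2} - \tfrac{s}{4} \;\le\; -\tfrac{s}{16},
\]
which rearranges to $s \ge \tfrac{8 \ln 2}{3} \approx 1.848$ and is therefore satisfied whenever $s \ge 2$. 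Combining these three steps yields exactly $\mathbb{P}(U \ge s) \le \exp(-sd/16)$.

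There is no real obstacle here: the argument is entirely routine once the chi-squared structure is recognized. The only design choice is the value of $t$; if a sharper constant were desired one could instead optimize over $t$ and use $t^* = (s-1)/(2s)$ to obtain the tighter bound $\exp\!\bigl(\tfrac{d}{2}(\ln s - s + 1)\bigr)$, and then verify $\ln s - s + 1 \le -s/8$ for $s \ge 2$ via monotonicity. The fixed choice $t = 1/4$ is simpler to write out and already suffices for the stated range $s \ge 2$, so I would present that version.
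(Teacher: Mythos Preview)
Your proof is correct and follows the same Chernoff/Cram\'er strategy as the paper: apply Markov's inequality to $e^{t\|X_0\|^2}$ using the explicit $\chi^2_d$ moment generating function, then reduce to an elementary scalar inequality valid for $s \ge 2$. The only difference is cosmetic: the paper first optimizes the Chernoff parameter to $t^\ast = (s-1)/(2s)$, obtaining the tight bound $\exp\bigl(-\tfrac{d}{2}(s-1-\ln s)\bigr)$, and then verifies $s-1-\ln s \ge s/8$ for $s \ge 2$, whereas you fix $t = 1/4$ up front and check $\tfrac{\ln 2}{2} - \tfrac{s}{4} \le -\tfrac{s}{16}$ directly. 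Your version is marginally more elementary (no optimization step), and you even note the paper's optimized route as an alternative; either path lands on the stated bound.
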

\begin{proof}
    First, we claim that for all $s \geq 1$, 
    \begin{equation} \label{eq_claim}
        \mathbb{P}\left(U \geq s \right) \leq \exp\left( -\frac{d}{2} f(s)  \right),
    \end{equation}
    where $f(s) = s - 1 - \ln(s)$.

To verify this claim, let $S := \|X_0\|^2$ and compute, for $\lambda > 0$, 
\begin{align}
    \mathbb{P}(S \geq ds) &=  \mathbb{P}(e^{\lambda S} \geq e^{\lambda d s}) \\
    &\leq e^{-\lambda d s} \mathbb{E}[e^{\lambda S}] = \frac{ e^{-\lambda d s}}{(1-2\lambda)^{d/2}},
\end{align}
where we have used the fact that $\|X_0\|^2 \sim \chi_d^2$ (chi-squared distributed) and the formula for its moment generating function in the last line. 
Choosing $\lambda = \frac{s-1}{2s} \in (0,1/2)$ minimizes the upper bound. Plugging this minimizer back into the upper bound, we obtain the result as claimed.

Now, observe that for $s \geq 2$, $f(s) \geq s/8$. Therefore, using \eqref{eq_claim} and this observation, we have, for all $s \geq 2$,
 \begin{equation}
        \mathbb{P}\left(U \geq s \right) \leq  \exp\left(-\frac{sd}{16} \right),
    \end{equation}
which is the result that we wanted to show.
\end{proof}

With this lemma in place, we can now prove Theorem \ref{thm1}. 

\begin{proof}[Proof of Theorem \ref{thm1}]
Let $T \in [0,1)$ and $\mathcal{D}_N$ be given. For all $t \in [0,T]$ and $z \in \R^d$, 
\begin{align}
    \|\hat{v}^*(t,z)\| &\leq \frac{1}{1-t} \sum_{i=1}^N w_i(t,z) \|x^{(i)} - z\| \\
    &\leq \frac{1}{1-t} \sum_{i=1}^N w_i(t,z) (\|x^{(i)}\| + \|z\|) \\
    &\leq \frac{1}{1-t}(M + \|z\|), \label{eq_bb}
\end{align}
where we have used the fact that $\sum_i w_i(t,z) = 1$ and the notation $M := \max_i \|x^{(i)}\|$.

Let $r_t := \|\psi_t(X_0)\|$. For all $t$ with $r_t > 0$, 
$$\dot{r}_t := \frac{dr_t}{dt} = \frac{\psi_t(X_0) \cdot \dot{\psi}_t(X_0)}{\|\psi_t(X_0)\|} \leq \frac{|\psi_t(X_0) \cdot \dot{\psi}_t(X_0)|}{\|\psi_t(X_0)\|} \leq \|\dot{\psi}_t(X_0)\| = \|\hat{v}^*(t, \psi_t(X_0))\|, $$
where we have used the chain rule for differentiation and Cauchy-Schwarz inequality.

Then, using \eqref{eq_bb}:
$$\dot{r}_t \leq \frac{1}{1-t} (M + r_t) $$
and so $(1-t) \dot{r}_t - r_t \leq M$.
Now, $$\frac{d}{dt}\left((1-t) r_t \right) = (1-t) \dot{r}_t - r_t \leq M.$$
Integrating both sides from $0$ to $t$ gives (and noting that $r_0 = \|X_0\|$): 
\begin{align} 
    (1-t) r_t - r_0 &\leq M t \\
    (1-t) r_t &\leq \|X_0\| + Mt \\
    \|\psi_t(X_0)\| &\leq \frac{\|X_0\| + Mt}{1-t} =: c_1(t) \|X_0\| + c_2(t) M, \label{eq_bb2}
\end{align}
where $c_1(t) = 1/(1-t)$ and $c_2(t) = t/(1-t)$.

Let $\hat{V}_t := \hat{v}^*(t, \psi_t(X_0))$. Using \eqref{eq_bb} and \eqref{eq_bb2}, we have:
\begin{align}
    \|\hat{V}_t\| &\leq \frac{1}{1-t}(M + \|\psi_t(X_0)\|) \leq \frac{1}{1-t}(M + c_1(t) \|X_0\| + c_2(t) M) \leq c_1^2(t) (M + \|X_0\|).
\end{align}

Therefore, 
\begin{align}
    K_t := \|\hat{V}_t\|^2 &\leq c_1^4(t) (\|X_0\| + M)^2 \leq 2 c_1^4(t) (\|X_0\|^2 + M^2),
\end{align}
where we have used the inequality $(x+y)^2 \leq 2(x^2 + y^2)$ for $x, y \in \R$. 

Integrating from $0$ to $T$ on both sides gives:
$$E_T = \int_0^T K_t dt \leq c_3(T) (\|X_0\|^2 + M^2), $$ where $c_3(T) =2  \int_0^T c_1^4(t) dt = \frac{2}{3}((1-T)^{-3} - 1)$.

Now, for any $u>0$, since $\{K_t \geq u\} \subset \left\{\|X_0\|^2 \geq \frac{u}{2c_1^4(t)} - M^2 \right\}$, we have:
\begin{align}
    \mathbb{P}[K_t \geq u \mid \mathcal{D}_N] &\leq \mathbb{P}[\|X_0\|^2/d \geq s \mid \mathcal{D}_N], 
\end{align}
where $s := u/(2dc_1^4(t)) - M^2/d$.

Fix $U_t := 2c_1^4(t)(2d+M^2)$, so that for every $u\ge U_t$ we have $s \ge 2$. Applying Lemma \ref{app_lem2} then gives
\[
\mathbb{P}[K_t \ge u \mid \mathcal{D}_N] \le \exp\!\left(-\frac{d}{16}\Big(\frac{u}{2dc_1^4(t)}-\frac{M^2}{d}\Big)\right)
= e^{M^2/16}\exp\!\left(-\frac{u}{32c_1^4(t)}\right).
\]
Thus part (a) holds with $C_t=e^{M^2/16}$ and $c_t=(1-t)^4/32$.

For part (b), define $U_T := c_3(T)(2d+M^2)$. Then, for every $u\ge U_T$, the same argument gives
\[
\mathbb{P}[E_T \ge u \mid \mathcal{D}_N] \le e^{M^2/16}\exp\!\left(-\frac{u}{16c_3(T)}\right).
\]
Hence part (b) holds with $C_T=e^{M^2/16}$ and $c_T = 1/(16 c_3(T)) = \frac{3}{32((1-T)^{-3} - 1)}$.
\end{proof}

\subsection{Proof of Theorem \ref{thm2_polydecay}}

\begin{proof}[Proof of Theorem~\ref{thm2_polydecay}]
The proof is analogous to that of Theorem~\ref{thm1}, with the Gaussian
tail bound replaced by the assumed power-law tail.

Recall from Proposition~\ref{prop1} that the empirical affine-flow minimizer
has the form
\[
\hat v^\ast(t,z)
= \sum_{i=1}^N w_i(t,z)\,\big(a_t(x^{(i)}) z + b_t(x^{(i)})\big),
\]
where the weights \(w_i(t,z)\) are nonnegative and sum to one. By the definition of
\[
A_{\max} := \sup_{t\in[0,T],\,i\in[N]} |a_t(x^{(i)})|,
\qquad
B_{\max} := \sup_{t\in[0,T],\,i\in[N]} \|b_t(x^{(i)})\|,
\]
we have, for all \(t \in [0,T]\) and all \(z \in \mathbb{R}^d\),
\begin{equation}
\label{eq:linear-growth-vhat}
\|\hat v^\ast(t,z)\|
\le \sum_{i=1}^N w_i(t,z)\big(|a_t(x^{(i)})|\,\|z\| + \|b_t(x^{(i)})\|\big)
\le A_{\max}\|z\| + B_{\max}.
\end{equation}

Let \(\psi_t\) denote the flow driven by \(\hat v^\ast\), i.e.,
\[
\dot\psi_t(X_0) = \hat v^\ast(t,\psi_t(X_0)), 
\qquad \psi_0(X_0) = X_0,
\]
and define \(r_t := \|\psi_t(X_0)\|\). Whenever\footnote{The same differential inequality holds for the upper Dini derivative of \(r_t\), which is sufficient for Grönwall.} \(r_t > 0\), we have, by the chain rule and
Cauchy--Schwarz inequality,
\[
\dot r_t
= \frac{\psi_t(X_0)}{\|\psi_t(X_0)\|} \cdot \dot\psi_t(X_0)
\le \|\hat v^\ast(t,\psi_t(X_0))\|.
\]
Using \eqref{eq:linear-growth-vhat} at \(z = \psi_t(X_0)\) gives
\[
\dot r_t
\le A_{\max} r_t + B_{\max}.
\]
By Grönwall’s lemma, there exist constants \(C_1(T), C_2(T) > 0\), depending only on
\(T, A_{\max}, B_{\max}\), such that for all \(t \in [0,T]\),
\begin{equation}
\label{eq:rt-bound}
r_t = \|\psi_t(X_0)\|
\le C_1(T)\,\|X_0\| + C_2(T).
\end{equation}

Define \(V_t := \hat v^\ast(t,\psi_t(X_0))\) and the instantaneous kinetic energy
\(K_t := \|V_t\|^2\). Combining \eqref{eq:linear-growth-vhat} and \eqref{eq:rt-bound},
we obtain
\[
\|V_t\|
\le A_{\max} r_t + B_{\max}
\le A_{\max}\big(C_1(T)\|X_0\| + C_2(T)\big) + B_{\max}
\le C_3(T)\,\|X_0\| + C_4(T),
\]
for suitable constants \(C_3(T), C_4(T) > 0\) depending only on \(T, A_{\max}, B_{\max}\).
Hence, by the inequality \((x+y)^2 \le 2(x^2 + y^2)\),
\begin{equation}
\label{eq:Kt-X0}
K_t
= \|V_t\|^2
\le 2C_3(T)^2\|X_0\|^2 + 2C_4(T)^2
\le C_K(T)\,\big(\|X_0\|^2 + 1\big),
\end{equation}
where we may take \(C_K(T) := 2\max\{C_3(T)^2, C_4(T)^2\}\).
Integrating \eqref{eq:Kt-X0} over \(t \in [0,T]\) yields the same type of bound
for the integrated kinetic energy
$E_T := \int_0^T K_t\,dt,$
i.e.,
\begin{equation}
\label{eq:ET-X0}
E_T
\le C_E(T)\,\big(\|X_0\|^2 + 1\big),
\end{equation}
for some constant \(C_E(T) := T C_K(T) > 0\) depending only on \(T, A_{\max}, B_{\max}\).

\medskip\noindent
\emph{Tail bounds.}
From \eqref{eq:Kt-X0}, for any \(u > 0\),
\[
\{K_t \ge u\}
\subseteq \Bigl\{\|X_0\|^2 \ge \frac{u}{C_K(T)} - 1\Bigr\}.
\]
Fix \(U_t\) large enough so that for all \(u \ge U_t\),
\(\frac{u}{C_K(T)} - 1 \ge 1\). Writing \(s := \sqrt{\frac{u}{C_K(T)} - 1}\),
we obtain
\[
\mathbb{P}\bigl(K_t \ge u \,\big|\, D_N\bigr)
\le \mathbb{P}\bigl(\|X_0\| \ge s \,\big|\, D_N\bigr)
= \mathbb{P}\bigl(\|X_0\| \ge s\bigr),
\]
since \(X_0\) is independent of \(D_N\). By the heavy-tailed assumption on \(p_0\),
for all \(s \ge 1\),
\[
\mathbb{P}\bigl(\|X_0\| \ge s\bigr)
\le \frac{C_\alpha}{s^\alpha}.
\]
For $u \ge U_t$ large enough so that $s^2 = \frac{u}{C_K(T)} - 1 \ge \frac{u}{2C_K(T)}$,
we have:
\[
\frac{1}{s^\alpha} \le \left(\frac{2C_K(T)}{u}\right)^{\alpha/2},
\]
and hence
\[
\mathbb{P}\bigl(K_t \ge u \,\big|\, D_N\bigr)
\le \frac{C_t}{u^{\alpha/2}},
\]
for all sufficiently large $u$, for a constant $C_t>0$ depending only on $t$, $T$, $A_{\max}$, $B_{\max}$, $\alpha$, and $C_\alpha$. This proves the first inequality in Theorem \ref{thm2_polydecay}.

The argument for $E_T$ is identical, using \eqref{eq:ET-X0} in place of
\eqref{eq:Kt-X0}. For any $u > 0$,
\[
\{E_T \ge u\}
\subseteq \Bigl\{\|X_0\|^2 \ge \frac{u}{C_E(T)} - 1\Bigr\},
\]
and the same substitution $s = \sqrt{\frac{u}{C_E(T)} - 1}$ together with the heavy-tailed bound on $\|X_0\|$ yields
\[
\mathbb{P}(E_T \ge u \,\big|\, D_N)
\le \frac{C_T}{u^{\alpha/2}},
\]
for all sufficiently large $u$, for a constant $C_T>0$ depending only on $T$, $A_{\max}$, $B_{\max}$, $\alpha$, and $C_\alpha$. This proves the second inequality in Theorem \ref{thm2_polydecay}.

\end{proof}

\subsection{Proof of Proposition \ref{lem:linear_growth_tail}}

\begin{proof}
Let \(R_t:=\|X_t\|\). Since \(X_t\) is an absolutely continuous solution of
\(\dot X_t=u_t(X_t)\), the map \(t\mapsto R_t\) is absolutely continuous.
For a.e. \(t\) such that \(X_t\neq 0\), the chain rule gives
\[
    \frac{d}{dt}R_t
    =
    \frac{X_t}{\|X_t\|}\cdot \dot X_t
    =
    \frac{X_t}{\|X_t\|}\cdot u_t(X_t)
    \le
    \|u_t(X_t)\|.
\]
At times where \(X_t=0\), the same inequality holds for the a.e. derivative
by the standard inequality for the norm of an absolutely continuous curve.
Hence, for a.e. \(t\in[0,T]\),
\[
    \frac{d}{dt}R_t
    \le
    \|u_t(X_t)\|
    \le
    L_T\|X_t\|+B_T
    =
    L_T R_t+B_T.
\]
By Gr\"onwall's inequality,
\[
    R_t
    \le
    e^{L_Tt}R_0
    +
    B_T\int_0^t e^{L_T(t-s)}\,ds.
\]
Since \(R_0=\|X_0\|\), there exists a constant \(C_1=C_1(T,L_T,B_T)\) such that
\[
    R_t\le C_1(1+\|X_0\|),
    \qquad t\in[0,T].
\]
Using the linear-growth condition once more,
\[
    \|u_t(X_t)\|
    \le
    L_T R_t+B_T
    \le
    L_TC_1(1+\|X_0\|)+B_T.
\]
Thus there exists \(C_2=C_2(T,L_T,B_T)\) such that
\(
    \|u_t(X_t)\|
    \le
    C_2(1+\|X_0\|),
    \  t\in[0,T].
\)
Therefore
\[
    K_t^u
    =
    \|u_t(X_t)\|^2
    \le
    C_2^2(1+\|X_0\|)^2
    \le
    C_3(1+\|X_0\|^2),
\]
where \(C_3=C_3(T,L_T,B_T)\). Consequently,
\[
    E_T^u
    =
    \int_0^T K_t^u\,dt
    \le
    TC_3(1+\|X_0\|^2).
\]
After absorbing \(T\) into the constant, there exists
\(C_T=C_T(T,L_T,B_T)\) such that, for all \(t\in[0,T]\),
\[
    K_t^u\le C_T(1+\|X_0\|^2),
    \qquad
    E_T^u\le C_T(1+\|X_0\|^2).
\]

We now derive the tail bounds. First suppose \(X_0\sim\mathcal N(0,I_d)\).
Then \(\|X_0\|^2\sim\chi_d^2\), and hence there exist constants
\(c_d,C_d>0\) such that, for all sufficiently large \(s\),
\(\mathbb P(\|X_0\|^2\ge s)\le C_d e^{-c_d s}\). Therefore, for sufficiently
large \(\lambda\),
\[
\begin{aligned}
    \mathbb P(K_t^u\ge \lambda)
    &\le
    \mathbb P\left(C_T(1+\|X_0\|^2)\ge \lambda\right) =
    \mathbb P\left(\|X_0\|^2\ge \frac{\lambda}{C_T}-1\right) \le
    C e^{-c\lambda},
\end{aligned}
\]
for constants \(c,C>0\) depending only on \(T,L_T,B_T\) and \(d\). The same
argument, using \(E_T^u\le C_T(1+\|X_0\|^2)\), gives
\(\mathbb P(E_T^u\ge \lambda)\le C e^{-c\lambda}\) for all sufficiently large
\(\lambda\).

Now suppose instead that
\(\mathbb P(\|X_0\|\ge s)\le C_\alpha s^{-\alpha}\) for all \(s\ge 1\).
Using \(K_t^u\le C_T(1+\|X_0\|^2)\), we have, for sufficiently large
\(\lambda\),
\[
\begin{aligned}
    \mathbb P(K_t^u\ge \lambda)
    &\le
    \mathbb P\left(C_T(1+\|X_0\|^2)\ge \lambda\right) =
    \mathbb P\left(\|X_0\|\ge \sqrt{\frac{\lambda}{C_T}-1}\right).
\end{aligned}
\]
For sufficiently large \(\lambda\), the threshold
\(\sqrt{\lambda/C_T-1}\) is at least \(1\). Hence the polynomial tail
assumption gives
\[
    \mathbb P(K_t^u\ge \lambda)
    \le
    C_\alpha
    \left(\sqrt{\frac{\lambda}{C_T}-1}\right)^{-\alpha}.
\]
For large enough \(\lambda\), there exists \(c_T>0\) such that
\(\sqrt{\lambda/C_T-1}\ge c_T\sqrt{\lambda}\). Therefore,
\(
    \mathbb P(K_t^u\ge \lambda)
    \le
    C\lambda^{-\alpha/2}.
\)
The same argument applies to \(E_T^u\), since
\(E_T^u\le C_T(1+\|X_0\|^2)\). This proves the claimed polynomial upper-tail
bounds.
\end{proof}

\subsection{Proof of Theorem~\ref{thm:flux_equiv_affine_tail}}

\begin{proof}
Fix \(t\in[0,T]\). We first show that the probability current generated by
\(r_t^A\) has zero divergence. Write
\(p_i(t,z)=\mathcal N(z;m_i(t),\Sigma_i(t))\). Then
\[
    \hat p_t(z)r_t^A(z)
    =
    \frac1N
    \sum_{i=1}^N
    p_i(t,z)\Sigma_i(t)A_i(t)(z-m_i(t)).
\]
It is enough to check that each component current has zero divergence. Fix
\(i\), and write \(m=m_i(t)\), \(\Sigma=\Sigma_i(t)\), \(A=A_i(t)\), and
\(y=z-m\). Since \(\nabla_z\log p_i(t,z)=-\Sigma^{-1}y\), we have
\[
\begin{aligned}
    \nabla_z\cdot\left[p_i(t,z)\Sigma A y\right]
    &=
    p_i(t,z)\operatorname{tr}(\Sigma A)
    +
    p_i(t,z)(-\Sigma^{-1}y)^\top\Sigma A y \\
    &=
    p_i(t,z)\operatorname{tr}(\Sigma A)
    -
    p_i(t,z)y^\top A y.
\end{aligned}
\]
Because \(\Sigma\) is symmetric and \(A\) is antisymmetric,
\(\operatorname{tr}(\Sigma A)=0\) and \(y^\top A y=0\). Therefore
\[
    \nabla_z\cdot
    \left[
        p_i(t,z)\Sigma_i(t)A_i(t)(z-m_i(t))
    \right]
    =
    0.
\]
Summing over \(i\) gives
\(
    \nabla\cdot(\hat p_t r_t^A)=0.
\)

We next verify the required \(L^2(\hat p_t;\mathbb R^d)\)-membership. Since
the weights \(w_i(t,z)\) are nonnegative and sum to one,
\[
\begin{aligned}
    \|r_t^A(z)\|
    &\le
    \sum_{i=1}^N
    w_i(t,z)\|\Sigma_i(t)A_i(t)\|_{\mathrm{op}}\|z-m_i(t)\| \le
    R_T^A(\|z\|+M_T).
\end{aligned}
\]
Hence
\[
    \|r_t^A(z)\|^2
    \le
    2(R_T^A)^2(\|z\|^2+M_T^2).
\]
Since \(\hat p_t\) is a finite Gaussian mixture, it has finite second moment:
\[
    \int_{\mathbb R^d}\|z\|^2\hat p_t(z)\,dz
    =
    \frac1N\sum_{i=1}^N
    \left(\|m_i(t)\|^2+\operatorname{tr}\Sigma_i(t)\right)
    <\infty.
\]
Therefore \(r_t^A\in L^2(\hat p_t;\mathbb R^d)\). Together with
\(\nabla\cdot(\hat p_t r_t^A)=0\), this gives
\(r_t^A\in\mathcal R_{\hat p_t}\). Hence \(u_t^A=\hat v_t+r_t^A\) is
flux-equivalent to \(\hat v_t\). Since
\[
    \partial_t\hat p_t+\nabla\cdot(\hat p_t\hat v_t)=0,
\]
we also have
\(
    \partial_t\hat p_t+\nabla\cdot(\hat p_tu_t^A)=0.
\)

It remains to prove the growth bound for \(u_t^A\). Since the weights are
nonnegative and sum to one,
\[
\begin{aligned}
    \|\hat v_t(z)\|
    &\le
    \sum_{i=1}^N w_i(t,z)\|B_i(t)z+b_i(t)\| \le
    B_T^{\rm aff}\|z\|+b_T^{\rm aff}.
\end{aligned}
\]
Combining this with the bound on \(r_t^A\) above gives
\[
    \|u_t^A(z)\|
    \le
    (B_T^{\rm aff}+R_T^A)\|z\|
    +
    (b_T^{\rm aff}+R_T^A M_T).
\]
Thus \(u_t^A\) satisfies
\[
    \|u_t^A(z)\|
    \le
    L_T^A\|z\|+B_T^A,
    \qquad
    L_T^A:=B_T^{\rm aff}+R_T^A,
    \quad
    B_T^A:=b_T^{\rm aff}+R_T^A M_T.
\]
Applying Proposition~\ref{lem:linear_growth_tail} with \(L_T=L_T^A\) and
\(B_T=B_T^A\) gives the deterministic energy bounds and the stated Gaussian
or polynomial source-tail upper bounds.
\end{proof}

\subsection{Proof of Corollary \ref{cor:flux_equiv_vfrf}}
\begin{proof}
This is the special case of Theorem~\ref{thm:flux_equiv_affine_tail} with
\(m_i(t)=tx^{(i)}\), \(\Sigma_i(t)=\sigma_t^2I\), and
\(\sigma_t=1-(1-\sigma_{\min})t\). Then
\[
    \Sigma_i(t)A_i(t)(z-m_i(t))
    =
    \sigma_t^2A_i(t)(z-tx^{(i)}),
\]
which gives the stated flux-null remainder. Since
\(\sigma_t\in[\sigma_{\min},1]\), we have
\[
    \|\hat v_t(z)\|
    \le
    \frac{1-\sigma_{\min}}{\sigma_{\min}}\|z\|
    +
    \frac{M}{\sigma_{\min}}.
\]
Moreover,
\[
    \|r_t^A(z)\|
    \le
    \sigma_t^2
    \sum_{i=1}^N w_i(t,z)\|A_i(t)\|_{\mathrm{op}}\|z-tx^{(i)}\|
    \le
    A_{\max}(\|z\|+M).
\]
Combining these two inequalities gives the stated values of \(L_T^A\) and
\(B_T^A\). The flux-equivalence and source-tail conclusions then follow from
Theorem~\ref{thm:flux_equiv_affine_tail}.
\end{proof}


\section{Details on Empirical Validations}
\label{app:numerics}

This appendix gives implementation details for the numerical experiments in
Section~\ref{sec:numerics}. All experiments evaluate the closed-form empirical velocity directly; no neural network is trained.

\begin{figure}[!b]
\centering
\includegraphics[width=1\textwidth]{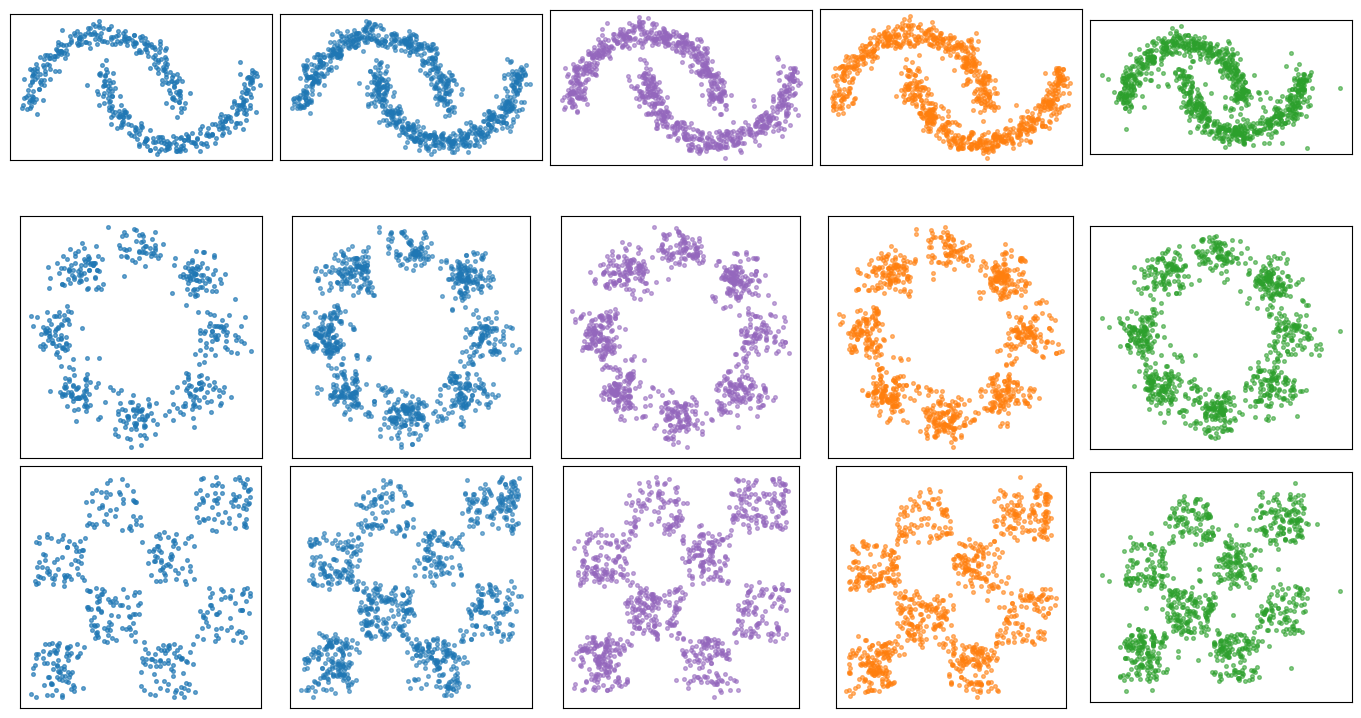}
\caption{
Visualization of generated samples for Two Moons, 8 Gaussians, and Checkerboard. They are near-terminal generated samples at \(T=0.97\). These plots are included only as a sanity check; the experiments are designed to study kinetic energy tails, not sample quality.
}
\label{fig:visual}
\end{figure}

\subsection{Empirical Affine-Flow Experiment}

For the empirical affine-flow experiment, we use the regularized affine path
\(
    Z_t=t x^{(i)} + s_t X_0,
    \ 
    s_t=1-(1-\sigma_{\min})t.
\)
The conditional density is
\(
    p_t(z\mid x^{(i)})
    =
    s_t^{-d}
    K\!\left(\frac{z-tx^{(i)}}{s_t}\right),
\)
and the conditional velocity is
\(
    v_i(t,z)
    =
    \frac{x^{(i)}-(1-\sigma_{\min})z}{s_t}.
\)
The empirical minimizer is therefore
\(
    \hat v(t,z)
    =
    \sum_{i=1}^N w_i(t,z)
    \frac{x^{(i)}-(1-\sigma_{\min})z}{s_t},
\)
where
\(
    w_i(t,z)
    =
    \frac{
    K\!\left((z-tx^{(i)})/s_t\right)
    }{
    \sum_{j=1}^N
    K\!\left((z-tx^{(j)})/s_t\right)
    }.
\)

The empirical sampler is integrated using forward Euler:
\(
    Z_{k+1}
    =
    Z_k+\Delta t\,\hat v(t_k,Z_k).
\)
The integrated kinetic energy is approximated by the left-endpoint rule
\(
    E_T^{\Delta t}
    =
    \sum_{k=0}^{n_{\mathrm{steps}}-1}
    \Delta t\,
    \|\hat v(t_k,Z_k)\|^2.
\)
This left-endpoint approximation is paired with the forward Euler trajectory. In contrast, the
affine sharpness experiment below uses a trapezoidal rule because the velocity can be evaluated
from a closed-form expression.

The empirical experiment settings are shown in Table~\ref{tab:empirical_numerics_settings}. The generated samples are shown in Figure \ref{fig:visual}.

\begin{table}[!h]
\centering
\small 
\begin{tabular}{ll}
\toprule
Parameter & Value \\
\midrule
Training samples \(N\) & \(500\) \\
Generated samples per seed \(M\) & \(1000\) \\
Number of seeds & \(5\) \\
Datasets & Two moons, eight Gaussians, checkerboard \\
Dimension & \(d=2\) \\
Sources & Gaussian, Student-\(t_2\), Student-\(t_5\), Student-\(t_{10}\) \\
Regularization & \(\sigma_{\min}=0.02\) \\
Integration horizon & \(T=0.97\) \\
Euler steps & \(100\) \\
Instantaneous energy time & \(t_{\mathrm{mid}}\approx0.55T\) \\
\bottomrule
\end{tabular}
\caption{Numerical settings for the empirical affine-flow experiments.}
\label{tab:empirical_numerics_settings}
\end{table}

For coordinate-wise Student-\(t_\nu\) sources in fixed dimension,
\(
    \mathbb P(\|X_0\|>s)\asymp s^{-\nu}.
\)
Since energy is often comparable to \(\|X_0\|^2\) in nondegenerate affine settings, the natural
benchmark for energy tails is
\(
    \mathbb P(E_T>u)\approx u^{-\nu/2}.
\)
For the nonlinear empirical affine-flow sampler, however, Theorem~\ref{thm2_polydecay} gives
only an upper-tail bound, not an exact tail-index identity. Therefore fitted log-log slopes for
the empirical sampler should be interpreted as qualitative diagnostics only.

\subsection{Diagnostics}

For each run, we compute the empirical survival function
\(
    \widehat S_E(u)
    =
    \frac1M
    \sum_{m=1}^M
    \mathbf 1\{E_T^{(m)}>u\}.
\)
We visualize \(\widehat S_E\) on both log-linear and log-log axes. Log-linear plots highlight
exponential-type behavior,
\(
    \log\widehat S_E(u)\approx a-cu,
\)
whereas log-log plots highlight polynomial-type behavior,
\(
    \log\widehat S_E(u)\approx a-\beta\log u.
\)
We also compute high-energy quantiles, including the empirical \(90\%\), \(95\%\), and \(99\%\)
quantiles of \(E_T\).

For further diagnostics, we also record the instantaneous kinetic energy
\(
    K_{t_{\mathrm{mid}}}
    =
    \|\hat v(t_{\mathrm{mid}},Z_{t_{\mathrm{mid}}})\|^2.
\)
Figure~\ref{fig:empirical_Kmid_survival_appendix} shows that the source-driven tail ordering for
\(K_{t_{\mathrm{mid}}}\) matches the behavior observed for \(E_T\).

\begin{figure}[!h]
\centering
\includegraphics[width=1\textwidth]{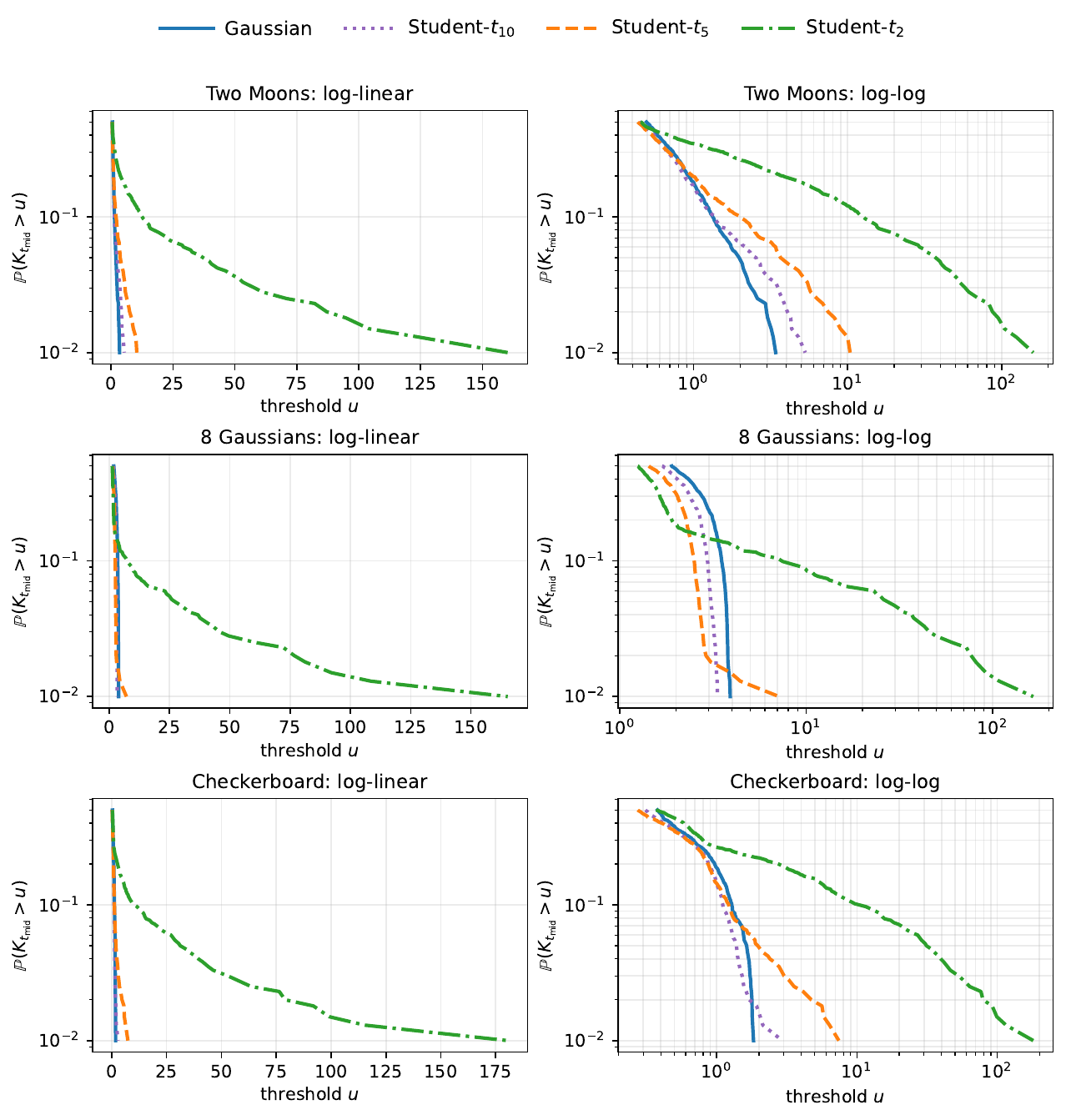}
\caption{
Empirical survival curves for the instantaneous kinetic energy \(K_{t_{\mathrm{mid}}}\). The
source-driven ordering of tail heaviness matches the behavior observed for the integrated energy
\(E_T\).
}
\label{fig:empirical_Kmid_survival_appendix}
\end{figure}

\subsection{Affine Sharpness Experiment}

For the sharpness experiment, we use the affine ODE
\(
    \dot Z_t=AZ_t+b,
\)
with
\(
    A=
    \begin{pmatrix}
    1.2 & 0.35\\
    0.35 & 0.8
    \end{pmatrix},
    \ 
    b=
    \begin{pmatrix}
    0.7\\
    -0.4
    \end{pmatrix}.
\)
The matrix \(A\) is symmetric positive definite. Defining
\(
    Y_t=AZ_t+b,
\)
we obtain
\(
    \dot Y_t=AY_t,
    \ 
    Y_t=e^{tA}(AX_0+b).
\)
Thus,
\(
    E_T
    =
    \int_0^T
    \|e^{tA}(AX_0+b)\|^2\,dt
    =
    (AX_0+b)^\top G_T(AX_0+b),
\)
where
\(
    G_T=
    \int_0^T e^{tA^\top}e^{tA}\,dt.
\)
Since \(A\) is nonsingular and \(G_T\succ0\), this quadratic form is comparable to
\(\|X_0\|^2\) in the tail.

The sharpness experiment settings are shown in Table~\ref{tab:sharpness_numerics_settings}.
The energy integral is approximated with a trapezoidal rule.

\begin{table}[!h]
\centering
\small 
\begin{tabular}{ll}
\toprule
Parameter & Value \\
\midrule
Generated samples \(M\) & \(100000\) \\
Dimension & \(d=2\) \\
Sources & Gaussian, Student-\(t_2\), Student-\(t_5\), Student-\(t_{10}\) \\
ODE & \(\dot Z_t=AZ_t+b\) \\
Integration horizon & \(T=1\) \\
Quadrature steps & \(400\) \\
\bottomrule
\end{tabular}
\caption{Numerical settings for the affine sharpness experiment.}
\label{tab:sharpness_numerics_settings}
\end{table}

\end{document}